\journal{journal}
\newtheorem{THEOREM}{Theorem}
\newenvironment{theorem}{\begin{THEOREM}  }%
                        {\end{THEOREM}}
\newtheorem{LEMMA}[THEOREM]{Lemma}
                      {\end{LEMMA}}
\newtheorem{COROLLARY}[THEOREM]{Corollary}
                          {\end{COROLLARY}}
\newtheorem{PROPOSITION}[THEOREM]{Proposition}
\newenvironment{proposition}{\begin{PROPOSITION}  }%
                            {\end{PROPOSITION}}
\newtheorem{DEFINITION}[THEOREM]{Definition}
\newenvironment{definition}{\begin{DEFINITION}  \rm}%
                            {\end{DEFINITION}}
\newtheorem{CLAIM}[THEOREM]{Claim}
                            {\end{CLAIM}}
\newtheorem{EXAMPLE}[THEOREM]{Example}
\newenvironment{example}{\begin{EXAMPLE} \rm}%
                            {\end{EXAMPLE}}
\newtheorem{REMARK}[THEOREM]{Remark}
                            {\end{REMARK}}
							\newtheorem{NOTATION}[THEOREM]{Notation}
							                            {\end{NOTATION}}
\DeclareMathAlphabet{\mathitbf}{OML}{cmm}{b}{it}
\newcommand{\sub}{_}
\def\su{^}
\newcommand{\real}{{\mathbb{R}}}
\newcommand{\wmc}{\textsc{WMC}}
\newcommand{\gt}{>}
\newcommand{\C}{{\cal C}}
\newcommand{\F}{{\cal F}}
\newcommand{\kb}{\Delta}
\newcommand{\U}{{\cal U}}
\newcommand{\set}[1]{\left\{ #1 \right\}}
\newcommand{\pr}{\Pr}
\newcommand{\high}{\sub{h}}
\newcommand{\low}{\sub{l}}
\newcommand{\kbh}{\kb\high}
\newcommand{\kbl}{\kb\low}
\newcommand{\kbll}{\U\sub l}
\newcommand{\kbhh}{\U\sub h}
\newcommand{\kblll}{\C\sub l}
\newcommand{\kbhhh}{\C\sub h}
\newcommand{\wh}{w\high}
\newcommand{\wl}{w\low} 
\newcommand{\mh}{M\high}
\newcommand{\ml}{M\low}
\newcommand{\iso}{\sim\sub m}
\newcommand{\vecx}{\vec{x}}
\newcommand{\lang}{\textit{Lang}}
\newcommand{\lits}{\textit{Lits}}
\newcommand{\mods}{\textit{Models}}
\newcommand{\atom}{p}
\begin{document}
	
\begin{frontmatter}
		
		
	
	 
	 
%
%

%
%

\title{Abstracting   Probabilistic Models:  A Logical Perspective\thanks{The author was supported by a Royal Society University Research Fellowship.}}

		\cortext[cor]{Corresponding author. The author was supported by a Royal Society University Research Fellowship.}

		 \address[edinburgh]{School of Informatics, University of Edinburgh,
		Edinburgh, UK.}
		 \address[turing]{Alan Turing Institute,
		London, UK.}

		\author[edinburgh,turing]{Vaishak Belle\corref{cor}}
		\ead{vaishak@ed.ac.uk}

	%
		%
%
%
%
%

%
%
%

%


\begin{abstract} Abstraction is a powerful idea  widely used in science, to model, reason and explain the behavior of systems in a more tractable search space, by omitting irrelevant details. While notions of abstraction have matured for deterministic systems, the case for abstracting probabilistic models is not yet fully  understood. 

In this paper, we provide a semantical framework for  analyzing such abstractions  from first  principles. We develop the framework in a general way, allowing for expressive languages, including logic-based ones that admit relational and hierarchical constructs with stochastic primitives.   We motivate a definition of consistency between a high-level model and its low-level counterpart, but also treat the case when the high-level model is missing critical information present in the low-level model.  We go on to prove prove properties of abstractions, both at the level of the parameter as well as the structure of the models. 
We conclude with some observations about how abstractions can be derived automatically.

\end{abstract}


\end{frontmatter}

\section{Introduction}

\emph{Abstraction} is a powerful idea widely used in science to explain phenomena at the required granularity.  Think of explaining a heart disease  in terms of  its anatomical components versus its molecular composition. Think of understanding the political dynamics of elections by studying micro level phenomena (say, voter grievances  in  counties) versus  macro level events (e.g., television  advertisements, gerrymandering). In particular, in computer science, it is often understood as the process of mapping one representation onto a  simpler representation by suppressing irrelevant information.  
The motivation is three-fold: 

\begin{enumerate}

\item[(a)] When representing complex pieces of knowledge, abstraction can provide a way to structure that knowledge, hierarchically or otherwise, so as to yield descriptive clarity and modularity. 

\item[(b)] Reasoning over large  graphs, programs, and other  structures is almost always computationally challenging, and so abstracting the problem domain to a  smaller search space is attractive. Even in the case of tractable representations, such as  arithmetic circuits {\cite{a-knowledge-compilation-map}},  reasoning is  polynomial in the circuit size, so clearly a smaller circuit  is more effective.

\item[(c)] Lastly,  and perhaps most significantly, 
 abstraction features pervasively in commonsense reasoning, and there is much discussion in the fields of cognitive science and philosophy  on the role of abstractions for explanations \cite{jorland1994idealization,doi:10.1111/tops.12278}; for example,  \cite{garfinkel1981forms} argues that concrete explanations containing too much detail are sensitive to  perturbations and are impractical for understanding physical phenomena. Thus, abstractions will likely be critical  for {\it explainable AI} \cite{explainable-artificial-intelligence-xai}, and indeed, much of that literature focuses on extracting high-level symbolic and/or programmatic representations from low-level data (e.g., \cite{DBLP:journals/corr/PenkovR17,sreedharan2018hierarchical}).

\end{enumerate} 

Formal perspectives on abstraction  have matured considerably over the years \cite{giunchiglia1992theory,milner1989communication,banihashemi2017abstraction}. In particular, the work of \cite{banihashemi2017abstraction} is  noteworthy as it identifies how notions of soundness and completeness relate to the model-theoretic properties of a high-level abstraction and the corresponding low-level theory. 
However, the formal analysis of abstraction has largely focused on categorical (deterministic and non-probabilistic) domains; that is,  both the high-level and the low-level representations are assumed to be categorical  assertions. 
In that regard, existing frameworks are not immediately applicable to the fields of probabilistic modeling and statistical machine learning.  Indeed, we do not yet have a full  understanding  of which aspects of one probabilistic model, representing some low-level phenomena, can be omitted when building a less granular (possibly non-probabilistic) model standing for a high-level understanding of the domain.

 In this paper, we provide a semantical framework for  analyzing such abstractions  from first  principles. We develop the framework in a general way, allowing for expressive languages, including logic-based ones that admit relational and hierarchical constructs with stochastic primitives \cite{probabilistic-models-for-relational-data,an-introduction-to-statistical-relational-learning}. Representative  examples of such languages include probabilistic databases  and statistical knowledge bases, which have received considerable attention both in the academic and  industry circles \cite{probabilistic-databases,markov-logic-networks,probase:-a-probabilistic-taxonomy-for-text,knowledge-vault:-a-web-scale-approach,deepdive:-web-scale-knowledge-base-construction,toward-an-architecture-for-never-ending-language}. 



In this work, we motivate a definition of  consistency between a high-level (probabilistic or logical) model and its low-level (probabilistic)   counterpart, but also treat the case when the high-level model is missing critical information present in the low-level model.
We  go on to prove properties of abstractions, both at the level of the parameter as well as the structure of the models. Put differently, we first motivate a definition of abstraction purely at the level of the model theory, which then provides the basis for analyzing the properties of ``unweighted abstractions.'' (That is, probabilities are simply ignored in that construction.) We  use that  analysis to investigate how ``weighted abstractions'' can be defined. We then  study how to incorporate low-level evidence and reason about it in the high-level representation. 
We conclude with some observations about how abstractions can be derived automatically.

With the development of this framework, we hope to provide a formal basis for  developing probabilistic abstractions in service of increased   modularity,  tractability and interpretability.

\section{Desiderata} %
\label{sec:desiderata}

Before developing a framework for abstraction, let us briefly reflect on what is desired of such a framework.  To a first approximation, a formal theory of abstraction can be approached  in three stages:
 
  \begin{enumerate}
 	\item How should abstraction be defined between  a high-level representation \( \kbh \)  and a low-level one \( \kbl \)?

	\item Given \( \kbh \) and \( \kbl, \)
	how do we prove that \( \kbh \) is an abstraction of \( \kbl \)? 
	\item Given \( \kbl \) and a target high-level vocabulary, how do we find \( \kbh? \)

 \end{enumerate}
At the outset, in this work,  we are  concerned with (1) and (2), but we will also consider a preliminary investigation of (3).


In essence, abstractions are  about omitting irrelevant details, while providing a less granular language to capture and reason about the underlying probabilistic components. To motivate that using an example, consider a \emph \emph{probabilistic relational model}  (PRM)  on entity-relationships for a university database \( \U \) (adapted from \cite{probabilistic-models-for-relational-data}). The model instantiates constraints for a (parameterised) Bayesian network:

\tikz {
\node (a) at (0,2) {Difficulty}; 
\node (b) at (2,2) {Grades}; 
\node (c) at (4,2) {IQ};
\draw (a) edge[->] (b) (c) edge[->] (b);
}

\newcommand{\dis}[1]{{\{#1\}}}

\newcommand{\Pf}{{\it friends}}
\newcommand{\Pd}{{\it diff}}
\newcommand{\Piq}{{\it iq}}
\newcommand{\Pa}{{\it advises}}
\newcommand{\Pt}{{\it takes}}
\newcommand{\Ptt}{{\it teaches}}
\newcommand{\Pg}{{\it grades}}
\newcommand{\Ce}{{E}}
\newcommand{\Cm}{{M}}
\newcommand{\Ch}{{H}}
\newcommand{\Cp}{{P}}
\newcommand{\Cf}{{F}}
\newcommand{\Cl}{{L}}
\newcommand{\Cn}{{N}}
\newcommand{\Cb}{{B}}
\newcommand{\Co}{{O}}
\newcommand{\Cg}{{G}}
 as follows, referred to as the low-level theory \( \kbll \) in the sequel: \begin{itemize}
	\item[\textbf{0.7}]  \( \Pd(x,\Ce) \)
	\item[\textbf{0.1}]  \( \Pd(x,\Cm) \)
	
	\item[\textbf{0.2}] \( \Pd(x,\Ch) \)
	\item[\textbf{0.25}] \( \Piq(x,\Cl) \land \Pd(y,\Ce) \land \Pt(x,y) \supset \Pg(x,y,u)  \) for \( u \in \set{7,8,9,10} \)
	\item[\textbf{0.25}] \( \Piq(x,\Cl) \land \neg \Pd(y,\Ce) \land \Pt(x,y) \supset \Pg(x,y,u) \) for \( u \in \set{5,6,7,8} \) 
\end{itemize}
where the constants \( \Ce,\Cm, \Ch, \Cl \) stand for \emph{easy, medium, hard, low} respectively. (A precise encoding will be presented in a subsequent section.) 

  The first constraint says that for any given  course, say \( B, \)  the probability that its difficulty level is easy is 0.7. The fourth constraint says that for any low IQ student taking an easy course, the probability that his grade is 7 is \( 0.25 \), and likewise, the probability that his grade is 8 is \( 0.25 \), and so on. 
  More generally, this theory  says that courses come in three levels of difficulty, and when a low IQ student takes an easy course, his grades can be modeled as a uniform distribution on \( \set{7,8,9,10} \), and when he does not take an easy course, it is a uniform distribution on \( \set{5,6,7,8} \).

A simple yet powerful type of abstraction to apply here is to abstract away the domain. 
Assuming the above sentences are the only ones of interest to us, we can lump the constants \( \set{\Cm, \Ch} \) as \( \Cn \), standing for \emph{not easy}, and lump the mentioned grade values together as \( \set{5,6}, \set{7,8}, \set{9,10} \) and denote them as \( \Cb, \Co, \Cg \), standing for \emph{bad, ok} and \emph{good} respectively. Then, we would obtain the following model, referred to as the high-level theory \( \kbhh \) in the sequel:\footnote{Although the abstraction uses the same predicates as \( \kbll, \) note that some of these are essentially new predicates, with different domains. For example, in \( \kbll, \) the difficulty ranges over \( \set{\Ce,\Cm,\Ch} \) whereas in \( \kbhh, \) it ranges over \( \set{\Ce,\Cn}. \) The context will make clear whether the predicates and constants are from \( \kbll \) or from \( \kbhh, \) and so we do not distinguish symbols from \( \kbhh \) by means of superscripts and such. }  \begin{itemize}
	\item[{\bf .7}]   \( \Pd(x,\Ce) \)
	\item[{\bf .3}]  \( \Pd(x,\Cn) \)
	\item[\textbf{.5}]  \( \Piq(x,\Cl) \land \Pd(y,\Ce) \land \Pt(x,y) \supset \Pg(x,y,u)  \) for \( u \in \set{\Co,\Cg} \)
	\item[\textbf{.5}] \( \Piq(x,\Cl) \land  \Pd(y,\Cn) \land \Pt(x,y) \supset \Pg(x,y,u) \) for \( u \in \set{\Cb,\Co} \) 
\end{itemize}

On closer inspection, the reader may observe that \( \kbhh \) is, in fact, a very faithful abstraction of \( \kbll  \), in terms of accurately grouping together probabilistic events. Indeed, we will formally show that  the two models  agree on  a large class of probabilistic queries.   The benefit, of course, is that \( \kbhh \) is defined over a smaller set of random variables.

However, such a faithful alignment may not always be needed, or even feasible. Consider a case where we abstract by  grouping definitions and complex formulas using new predicates. Suppose we had a course listing database \( \C. \) Let \( \C\sub l \) be a low-level theory: \begin{itemize}
	\item[{\bf .9}] \( {\it CS(x)} \supset \Pd(x,\Ch)  \) 
	
	\item[{\bf .8}] \( {\it Physics(x)} \supset \Pd(x,\Ce) \)
	\item[{\bf 1}] \( ({\it AI}(x) \supset {\it CS}(x)) \land  ({\it Astronomy}(x) \supset {\it Physics}(x)) \) 
\end{itemize}
We may want to define a high-level theory \( \C\sub h \) that simply uses \( {\it Science}(x) \) in place of 	\( {\it CS}(x) \) and \( {\it Physics}(x) \).
But then the weight on  rules such as \( {\it Science}(x) \supset \Pd(x,\Ch) \) or \( {\it Science}(x) \supset \Pd(x,\Ce) \)
may not be immediate  to derive, in general.  %
Predicate abstraction can also be used as a strategy to check for probabilistically significant events. For example, an administrator may  only be interested in ensuring that all low IQ students enroll in an easy course: \( \emph{alert} \doteq \neg~ [\forall x, \exists y ~(\Piq(x,L) \supset (\Pt(x,y) \land \Pd(y,\Ce)))]  \)
%
and specifically, whether that atom ever obtains a non-zero probability. 
Indeed, the literature on verification and security often approach  the reasoning of complex systems by distinguishing \emph{bad} states (e.g., invalid paths, safety conditions) \cite{sharma2013verification}, and  correspondingly, checking whether such states are probable or improbable. Naturally, by means of a relational language, such definitions can be arbitrarily complex and hierarchical, and different from classical works on categorical abstraction,  predicates at every level can denote  stochastic primitives.

In that spirit, we show that abstraction can be understood  both from the viewpoint of the parameters (i.e., weights and/or probabilities) and 
structure (i.e., the logical sentences).  While we do discuss the case of aligning probabilities exactly  between the high-level and low-level models, we also consider the most immediate  case of parameter abstraction where one obtains an alignment between the probable and improbable events. When it comes to abstracting structure, we show that one wants to ensure that the high-level model is consistent, and perhaps additionally that it is not missing critical information present at the low-level model. This  then motivates a  definition of \textit{soundness} and \textit{completeness}. 

Our starting point was the work of~\cite{banihashemi2017abstraction}, which introduces a simple  way to logically characterize the differences between a high-level theory and the low-level one, via the well-understood notion of isomorphisms. We show how that account can be extended to reason about probabilities by appealing to the formulation of \emph{weighted model counting}  \cite{on-probabilistic-inference-by-weighted-model}, which serves as an assembly language for many popular  PRMs. The resulting treatment can be seen to share much of the simplicity of \cite{banihashemi2017abstraction}, thereby providing an amenable framework  for understanding probabilistic and logical abstractions of PRMs. 

We reiterate that our focus here is primarily about the semantic constraints for analyzing abstractions. Thus, at the outset, we assume that we are given a  \emph{high-level theory}, capturing the more abstract probabilistic model, and a \emph{low-level theory}, understood as the underlying probabilistic model that is to be abstracted. Nonetheless, we conclude our technical treatment by  discussing some ideas for deriving abstractions automatically.

\section{Preliminaries} 

Our technical development will discuss  the semantical constraints between different representations, defined in terms of  a mapping between probabilistic events. For the purpose of our results, it will be useful to think in terms of these representations being knowledge bases (i.e., sentences in some logical language), over which one defines a   measurable space \( (S, \F_S) \)  \cite{halpern2004representation}.  In  particular, for any given knowledge base \( \Delta \), we imagine \( S \) to be some subset of  the set of interpretations of \( \Delta \). Moreover, when analyzing how precisely two representations agree, we will be considering the probabilities of queries that additionally use  logical connectives such as conjunction and negation, and so we will require that measures  be well-defined over such connectives.  \smallskip



%
%
%
%
%
%
%




Concretely, define a relational language \( \lang \) with  predicate symbols  of every arity \[ \set{P_1(x), \ldots, P_2(x,y), \ldots, P_3(x,y,z), \ldots}, \] variables \( \set{x,y, z, \ldots} \), connectives \( {\lor, \neg, \land, \forall} \) and a  set of constants \( \set{c\sub 1, c\sub 2, \ldots} \), 
serving as the {\it domain of discourse} for quantification. 
To facilitate comparisons  between vocabularies, we assume that for each high/low-level theory  the relations and domain are finite subsets of this  fixed infinite vocabulary. For simplicity, we restrict our attention to probability spaces over finitely many random variables, as would be instantiated from our assumption. This would be applicable to most  statistical relational languages, such as probabilistic databases,   Markov logic networks and knowledge graphs  \cite{probabilistic-databases,markov-logic-networks,knowledge-vault:-a-web-scale-approach}. Although from a logical viewpoint, we could simply have used a propositional one, we will introduce  a relational language, as is usual in the literature  \cite{statistical-relational-ai:-logic-probability}.\footnote{If there are infinitely many random variables instantiated from the first-order language, we may consider countably additive probability measures \cite{concerning-measures-in-first-order,reasoning-about-uncertainty} or other  syntactic  conditions, as in, for example, \cite{graphical-markov-models-for-infinitely,reasoning-about-infinite-random,markov-logic-in-infinite-domains,approximate-inference-for-infinite-contingent,open-universe-weighted-model-counting}.}  

Standard abbreviations apply for connectives: we write \( \alpha\supset \beta \) (material implication) to mean \( \neg\alpha\lor \beta \), \( \alpha\equiv \beta \) (equivalence) to mean \( (\alpha\supset \beta) \land (\beta\supset\alpha) \), 
and \( \exists x\alpha \) (existential quantification) to mean \( \neg\forall x \neg \alpha \). In particular, when the domain is fixed to a finite set $D$, we write $\forall x~ \alpha(x)$ to mean $\bigwedge_{c\in D} \alpha(c)$.  Moreover, $\alpha \land \beta$ is equivalent to $\neg(\neg \alpha \lor \neg \beta)$, so in proofs, we  only consider the connectives \( \set{\land,\neg}. \) 

The set of ground atoms is defined as: \[ 
 \set{P(c\sub 1, \ldots, c\sub k) \mid \textrm{$P$ is a relation, $c\sub i \in D$}}. 
\] The set of ground literals is obtained from the set of atoms, and their negations. Henceforth, when we write atoms and literals, we will implicitly mean ground ones. We often use \( p \) and \( q \) to denote atoms,  and \( l \) and \( d \) to denote literals.

A model \( M \) is a \( \set{0,1} \) assignment to the set of atoms. Using \( \models \) to denote satisfaction, the semantics for a formula \( \phi \) is defined inductively: $M \models \atom$ for atom $\atom$ iff $M[\atom] = 1$; $M \models \neg \phi$ iff $M \models \phi$ does not hold (also written \( M\not\models \phi \)); $M \models \phi\lor \psi$ iff $M \models\phi$ or $M\models \psi$; and \( M\models \phi\land \psi \) iff \( M\models \phi \) and \( M\models \psi \). We write $l\in M$ to mean that \( M \models l \) for literal \( l. \)

We say a formula \( \phi \) is \emph{satisfiable} iff there is a model \( M \) such that \( M\models\phi. \) 
 We write  \( \kb \models \phi \) to mean that in every model $M$ such that $M\models \kb$, it is also the case that $M \models\phi$. In particular, we say that \( \phi \) is \emph{valid}, written \( \models \phi, \) iff for every model \( M \), \( M\models \phi \).

To prepare for our technical discussion, we discuss some notational conventions. Given a formula \( \kb \), 
we write \( \lang(\kb) \) to mean the the logical sub-language implicit in \( \kb \): that is, the set of well-formed formulas constructed from  relations \( \set{P_1(x), \ldots} \) and constants \( D \) mentioned in \( \kb \). We can then write $\alpha \in \lang(\kb)$ to mean such as well-formed formula. Analogously, we write \( \lits(\kb) \) to mean the set of literals obtained from \( \lang(\kb) \). For example, if \( \kb = P(c) \lor Q(c,a) \), then 
 $\neg P(a) \in \lang(\kb), Q(a,a) \in \lang(\kb)$,  \( P(a) \in \lits(\kb), \neg Q(a,c) \in \lits(\kb) \), and so on. 
  We often abuse notation and write \( \vec c\in D \) to mean that each of the constants mentioned in \( \vec c \) is taken from \( D. \) Finally, given a \( \kb, \) when we write \( M\models \kb \), it is implicit here that we take \( M \) to be a model for the language \( \lang(\kb) \); that is, it  is  a \( \set{0,1} \) assignment to the set of atoms in \( \lang(\kb) \). We can make this explicit by  writing \( M \in \mods(\lang(\kb)) \), or simply \( M\in \mods(\kb) \) for short.\footnote{The reason we go to some length to discuss our notational conventions is this: when we work with a fixed language, the set of relations, literals, and models to consider is immediate. That will no longer be true when we are thinking of different logical languages for  high-level and low-level theories, in which case  our notation will  provide context.} \smallskip

As hinted above, we will now assume that for any \( \Delta \), we are given a measurable space \( (S, \F_S) \), where \( S\subseteq \mods(\Delta) \) \cite{halpern2004representation}. Since \( \mods(\kb) \) is finite,  let \( S = \mods(\Delta) \)  for simplicity. For this measurable space, we further assume that for every \( \alpha\in \lang(\kb) \), \( \Pr(\alpha)  \) is a numeric term;  that is, every well-defined formula is accorded a probability. We further interpret a conditional probability expression as \[ \Pr(\alpha\mid \beta ) =  \frac{\Pr(\alpha\land\beta)}{\Pr(\beta)} \] 


This interpretation places very little restrictions on the computational machinery that one may use \cite{halpern2004representation}. 
For the sake of concreteness, {\it weighted model counting} (WMC) {\cite{solving-sat-and-bayesian-inference-with}}, for example, is a reasonable fit. We remark that nothing in our technical treatment hinges on using WMC, and we only use the framework to illustrate examples and the encoding for the university PRM. In some cases, we state useful properties of WMC, but these would hold in virtually all statistical relational languages and probabilistic logics \cite{halpern2004representation,statistical-relational-ai:-logic-probability,reasoning-about-knowledge-and-probability}. 

WMC is defined over the models of a propositional formula, and serves as an assembly language for a number of heterogeneous representations, including factor graphs, Bayesian networks, probabilistic databases and probabilistic programs \cite{solving-sat-and-bayesian-inference-with,probabilistic-databases,inference-in-probabilistic-logic-programs}. 
WMC enjoys a number of interesting properties that makes it particularly well-suited for our endeavor. First, it separates the symbolic representation (i.e., a logical encoding of the probabilistic model) from a weight function denoting the probabilities of variables, which allows us to investigate abstractions both at the level of structures and at the level of parameters. Second, WMC provides a semantic as well as a computational view for probabilistic reasoning. Semantically, the models of propositional formulas map to  \emph{states} in probability spaces (i.e., assignments of values to random variables). Computationally, we are able to reuse SAT  technology for building exact and approximate solvers \cite{model-counting}, while still  leveraging context-specific independences {\cite{context-specific-independence}}. 
In particular, recent approaches for WMC \cite{on-probabilistic-inference-by-weighted-model}  such as knowledge compilation \cite{a-knowledge-compilation-map} provide effective ways for enumerating and testing properties on propositional interpretations. 

Essentially,  WMC extends \textit{model counting}, which is the task of counting the models of a propositional formula \cite{model-counting}. In WMC, weights are  additionally accorded to  literals, and we are interested in summing the weights of the models, which is then defined in terms of the product of the literal weights. Standard probabilistic inference, WMC and model counting are, in fact, closely related problems, with polynomial time reductions to each other, with their decision versions being \#P-hard \cite{solving-sat-and-bayesian-inference-with,the-complexity-of-enumeration-and-reliability-problems}. Formally,\footnote{We define WMC at the level of the ground theory. In the literature, however, a special case of WMC is sometimes considered for relational languages, where the weight function maps predicates directly to numbers (e.g.,   \cite{lifted-inference-and-learning-in-statistical}). The intuitive idea is to treat this weight function as a template for all instances of the corresponding predicate, which, on the one hand, simplifies the specification of the weight function, and on the other, admits effective inference. We do not discuss such ideas here as it is orthogonal to the main thrust of this work (cf. penultimate section).} 

\begin{definition} Suppose \( \kb \) is a ground first-order sentence. Suppose $w$ is a function that maps the elements of \( \lits(\kb) \) to \( \real\su {[0,\infty)} \). Then the WMC of \( \kb \) is defined as: \[
	\wmc(\kb,w) = \sum\sub{M\models \kb} \prod\sub{l\in M} w(l) 
\]
Given a formula \( \phi \in \lang(\kb) \),  we can query \( \phi \) wrt evidence \( e \) for theory \( (\kb,w) \) using: \begin{equation}\label{eq:cond}\begin{array}{r}\tag{$\ddag$}
	\displaystyle \pr(\phi\mid e, \kb, w) = \frac{\wmc(\phi\land e\land \kb, w)}{\wmc(e\land \kb, w)} \\ =  \displaystyle \frac{\pr(\phi \land e,\kb,w)}{\pr( e,\kb,w)}  
\end{array}	 
\end{equation}
\end{definition}
When \( e = true, \) we simply write \( \Pr(\phi,\kb,w). \)
We remark for \( \Pr(\phi,\kb,w) \) to be well-defined, which is assumed, \( \wmc(\kb,w) \neq 0. \) (Thus, it is assumed that  \( \kb \) is satisfiable, and that $w$ does not map all the corresponding literals to 0.) If the context is clear, we often refer to \( \kb \) as the \emph{theory}, and to \( \phi \) as the \emph{query} or \emph{event}.

We immediately observe the following property from the definition of \( \wmc \).



\begin{theorem}\label{prop:wmc entailment}  If \( \kb\models \phi, \) then \( \pr(\phi,\kb,w) = 1. \) If \( \kb\land\phi \) is not satisfiable, then \( \pr(\phi,\kb,w) = 0. \)
	
\end{theorem}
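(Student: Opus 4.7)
The plan is to unfold the conditional probability expression from equation $(\ddag)$ with evidence $e = \textit{true}$, so that
\[
\pr(\phi,\kb,w) \;=\; \frac{\wmc(\phi \land \kb, w)}{\wmc(\kb,w)},
\]
and then analyse the numerator in the two cases. The denominator is nonzero by the standing well-definedness assumption following the definition of WMC, so the entire argument reduces to computing the numerator.

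For the first claim, I would use the standard model-theoretic fact that $\kb \models \phi$ iff $\mods(\kb) \subseteq \mods(\phi)$, so that $\mods(\kb \land \phi) = \mods(\kb)$. Substituting into the WMC sum gives
\[
\wmc(\phi \land \kb, w) \;=\; \sum_{M \models \kb \land \phi} \prod_{l \in M} w(l) \;=\; \sum_{M \models \kb} \prod_{l \in M} w(l) \;=\; \wmc(\kb,w),
\]
so the ratio is $1$. For the second claim, if $\kb \land \phi$ is unsatisfiable then $\{M : M \models \kb \land \phi\} = \emptyset$, and the WMC sum is empty, hence equal to $0$; dividing by the nonzero $\wmc(\kb,w)$ yields $0$.

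There is essentially no obstacle: both parts are one-line consequences of the definition of $\wmc$ together with the assumption that $\wmc(\kb,w) \neq 0$. The only mildly delicate point is being explicit that $\mods(\kb \land \phi)$ and $\mods(\kb)$ are taken within the same language $\lang(\kb)$ (as stipulated by the convention discussed earlier in the preliminaries), so that the two sums range over identical index sets of $\{0,1\}$-assignments; I would mention this briefly to keep the semantics unambiguous.
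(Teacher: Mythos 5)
Your proposal is correct and follows essentially the same route as the paper's proof: both reduce to the observation that $\kb \models \phi$ forces $\mods(\kb\land\phi)=\mods(\kb)$ so that $\wmc(\phi\land\kb,w)=\wmc(\kb,w)$, and that unsatisfiability of $\kb\land\phi$ makes the WMC sum empty, with the nonzero denominator $\wmc(\kb,w)$ doing the rest. Your added remarks about unfolding $(\ddag)$ with $e=\textit{true}$ and keeping both sums over $\mods(\lang(\kb))$ are just explicit statements of what the paper leaves implicit.
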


\begin{proof}
For the first property, every \( M \) such that \( M\models\kb \), \( M\models \phi \) also, and so \( \wmc(\phi\land\kb,w) = \wmc(\kb,w) \). For the second, \( \wmc(\phi\land \kb,w) = 0. \)   	
\end{proof}

\begin{example} We illustrate a WMC  encoding for \( \kbll \) based on the university PRM;  the encoding for others considered in this work are analogous. First, note that in atoms such as \( \Pd(x,y) \), the logical variable \( y \) captures the possible values of a random variable. Thus, they are to behave like logical functions. Formally, let \( \kbll \) be the union of the following, the free variables being implicitly universally quantified from the outside: 
	\begin{itemize}
	\item $\Pd(y,E) \lor \Pd(y,M) \lor \Pd(y,H)$
	\item \( f\sub 1 (x,y, u) \equiv [\Piq(x,\Cl) \land \Pd(y,\Ce) \land \Pt(x,y) \supset \Pg(x,y,u)  ] \) for \( u \in \set{7,8,9,10} \)
	\item \( f\sub 2 (x,y,u) \equiv [\Piq(x,\Cl) \land \neg \Pd(y,\Ce) \land \Pt(x,y) \supset \Pg(x,y,u)  ] \) for \( u \in \set{5,6,7,8} \) 
\end{itemize} The reason we need to introduce auxiliary predicates \( f\sub 1 \) and \( f\sub 2 \) is because WMC only allows weights on (ground) literals.

We also need the following hard constraints for capturing the logical functions: \begin{itemize}
	\item[] \( \exists u (\Pd(y,u)), \Pd(y,u) \land \Pd(y,v) \supset u = v  \) 
	\item[] \( \exists u (\Pg(x,y,u)), \Pg(x,y,u) \land \Pg(x,y,v) \supset u = v \)
\end{itemize}
 Suppose the domain of quantification for the students is only \( \set{A} \) and for courses is only \( \set{B} \). We then obtain atoms such as: \begin{itemize}
	\item[] \( \Pd(B,E), \Pd(B,M), \Pd(B,H), \Piq(A,\Cl),  \) \( \Pd(B,\Ce), \Pt(A,B), \Pg(A,B,7), \ldots \)
\end{itemize}
with a weight function \( \wl \)  for positive atoms derived from the parametric specification in an obvious fashion: \begin{itemize}
	\item[] \( w\sub l(\Pd(B,E)) = .7, \ldots, \wl(f\sub 1(A,B,7)) = .25, \ldots  \) 
\end{itemize}
We let the weight of a negated atom \( \wl(\neg a) \) to be \( 1 - \wl(a). \) Moreover, the ground instances \( f\sub 1 \) and \( f\sub 2 \) obtain the weights discussed in the parameterized version. The weights of all atoms not mentioning predicates \( \Pd, f\sub 1, f\sub 2 \) is taken to be 1. 
It then follows that \( \Pr( \Pd(B,E), \kbll, w) = .7 \), and \( \Pr(\Pg(A,B,7) \mid e, \kbll, w) = .25 \), where \( e = \Pt(A,B) \land \Piq(A,L) \land \Pd(A,E). \)

\end{example}

\section{Abstraction Framework} %
\label{sec:formulating_abstraction}

We assume that the abstraction framework is realized in terms of two types of representations: a \emph{high-level/abstract theory} that is mapped to a pre-existing  \emph{low-level/concrete theory}. Essentially,  the logical symbols (predicates and constants) may differ arbitrarily between the two theories.  In terms of notation, we use the subscript \( h \) to refer to components of the high-level theory, and \( l \) to refer to that of the low-level theory.

The first step is to formally establish the construct of a  \emph{refinement mapping} between the two theories: the mapping associates each high-level \emph{atom} to a low-level \emph{formula}, which may be arbitrarily complex. 

\begin{definition} Suppose \( \kbh \) and \( \kbl \) are two theories. We say \( m \) is a  \emph{refinement mapping} from \( \kbh \) to \( \kbl \) iff for all high-level atoms \( p \in \lang(\kbh), \) \( m(p) = \theta \sub {p} \) for some \( \theta\sub {p} \in \lang(\kbl) \).\footnote{When the high-level and low-level theories are defined over the same domain of discourse \( D \), \( m \) can have a compact specification of the form \( m(P(\vecx)) = \theta\sub P (\vecx) \), where \( P(\vecx) \) is a non-ground predicate, and \( \vecx \) are the only free variables in \( \theta\sub P \). So effectively the mapping works by  substitutions: for every  instance \( P(\vec c) \), we have \( m(P(\vec c)) = \theta\sub P(\vec c), \)  where \( \theta\sub P(\vec c) \) is obtained from \( \theta\sub P (\vecx) \) by substituting the free variables \( \vec x \) by \( \vec c. \)
	} 
	
	The mapping \( m \) is assumed to extend to complex formulas  \( \phi  \in \lang(\kbh) \) inductively: for atoms \( \phi  = p \), \( m(\phi) \) is as  above; \( m(\neg \phi) = \neg m(\phi) \); \( m(\phi \land \psi) = m(\phi) \land m(\psi) \).

\end{definition}

It is worth noting that a mapping is deliberately asymmetrical in the sense that its range need not include all the atoms of the low-level theory. That is, there may be atoms \( q \in \lang(\kbl) \), and consequently, also constants and relations, that do not appear in \( m(p) \) for every \( p \in \lang(\kbh). \) After all, abstractions are about omitting irrelevant details.

In general, we will want to use these mappings to discuss model-theoretic properties of the two theories, so we introduce the notion of an isomorphism: 

\begin{definition} Given a refinement mapping \( m \) as above, we say that \( M\high \in \mods(\kbh) \) is \( m \)-isomorphic to \( M\low \in \mods(\kbl) \) iff for all  atoms \(  p \in \lang(\kbh), \) we have \( M\high \models p \) iff \( M\low \models m(p) \). We write this as \( M\high \iso M\low. \) 
	
\end{definition}

Thus, isomorphism provides a way to align the truth values between high-level atom and low-level formulas. In particular, because of how refinement mappings can be defined for complex formulas, we obtain the following property: 

\begin{theorem}\label{thm iso} Suppose \( \mh \iso \ml. \) Then for all \(  \phi \in \lang(\kbh), \) \( \mh \models \phi \) iff \( \ml\models m(\phi) \). 	
\end{theorem}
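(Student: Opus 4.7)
The plan is to prove Theorem \ref{thm iso} by structural induction on the formula $\phi \in \lang(\kbh)$. Since the excerpt explicitly reduces the connectives of interest to $\{\land, \neg\}$ (noting that $\alpha \lor \beta$ is equivalent to $\neg(\neg \alpha \land \neg \beta)$, etc., and that quantifiers over finite domains unfold into finite conjunctions), it suffices to handle only these cases in the induction step. The refinement mapping $m$ has been defined to commute with both connectives, and that compositionality is precisely what makes the induction go through.

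For the base case, I take $\phi$ to be an atom $p \in \lang(\kbh)$. Then the conclusion is immediate from the definition of $m$-isomorphism: $\mh \models p$ iff $\ml \models m(p)$, which is exactly what is to be shown.

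For the inductive step, I first treat negation: assume the claim holds for $\psi$, so $\mh \models \psi$ iff $\ml \models m(\psi)$. Then $\mh \models \neg\psi$ iff $\mh \not\models \psi$, which by the induction hypothesis holds iff $\ml \not\models m(\psi)$, i.e., $\ml \models \neg m(\psi)$, which by definition of $m$ on complex formulas equals $m(\neg\psi)$. Next, for conjunction, suppose the claim holds for $\psi_1$ and $\psi_2$. Then $\mh \models \psi_1 \land \psi_2$ iff $\mh \models \psi_1$ and $\mh \models \psi_2$, which by the induction hypothesis holds iff $\ml \models m(\psi_1)$ and $\ml \models m(\psi_2)$, i.e., $\ml \models m(\psi_1) \land m(\psi_2) = m(\psi_1 \land \psi_2)$.

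There is no real obstacle here: the argument is a textbook structural induction, and the only substantive content is that the refinement mapping was defined homomorphically on $\neg$ and $\land$ so that the inductive clauses align on both sides of the ``iff''. The one subtlety worth flagging in the write-up is that the quantification ``for all $\phi \in \lang(\kbh)$'' tacitly uses the restriction of the language to $\{\land, \neg\}$ from the preliminaries, so that the two inductive cases actually cover all well-formed formulas under consideration.
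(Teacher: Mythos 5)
Your proof is correct and follows essentially the same approach as the paper: a structural induction on $\phi$ with the base case given by the definition of $m$-isomorphism and the negation and conjunction cases handled via the homomorphic extension of $m$. The remark about restricting to the connectives $\{\land,\neg\}$ matches the convention stated in the paper's preliminaries.
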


\begin{proof}
 We prove by induction on \( \phi. \) Base case immediate by definition. Negation: \( \mh \models \neg \phi \) iff \( \mh \not \models \phi \) iff (by hypothesis) \( \ml \not\models m(\phi) \) iff (by semantics) \( \ml \models \neg m(\phi) \) iff (by definition) \( \ml \models m(\neg \phi) \). Conjunction: \( \mh \models \phi \land \psi \) iff \( \mh \models \phi \) and \( \mh \models \psi \) iff (by hypothesis) \( \ml \models m(\phi) \) and \( \ml \models m(\psi) \) iff (by semantics) \( \ml \models m(\phi) \land m(\psi) \) iff (by definition) \( \ml \models m(\phi\land \psi). \)   
\end{proof}

\newcommand{\m}{m\sub {\U}}%

\begin{example} For the university PRM, we provide a mapping \( \m \) below. When free variables appear, we take it to mean that the mapping applies to all  substitutions. So, let $\m$ map \( \Pd(x,E) \), \( \Pt(x,y), \Piq(x,L)  \)   from \( \kbhh \) to the same atoms in \( \kbll, \) \( \m(\Pd(x,\Cn)) = \Pd(x,\Cm) \lor \Pd(x,\Ch)  \), \( \m(\Pg(x,y,\Cb)) = \Pg(x,y,5) \lor \Pg(x,y,6) \), \( \m(\Pg(x,y,\Co)) = \Pg(x,y,7) \lor \Pg(x,y,8)  \), and \( \m(\Pg(x,y,\Cg)) = \Pg(x,y,9) \lor \Pg(x,y,10) \). 

Suppose the domain includes a single student \( A, \) who takes course \( B \). Suppose \( M\sub h \) is a model of \( \kbhh \) where \( \{ \Piq(A,L), \Pt(A,B),  \) \( \Pd(B,\Ce), \Pg(A,B,\Co) \} \) holds. Now consider the model \( M\sub l \) of \( \kbll \) where \\ \( \{ \Piq(A,L), \Pt(A,B),  \Pd(B,\Ce), \Pg(A,B,7) \} \) holds. It is easy to verify that \( M\sub h \iso \ml \), because the main question is whether \( \ml \) satisfies \( \m(\Pg(A,B,\Co)) = \Pg(A,B,7) \lor \Pg(A,B,8) \), which it does.

\end{example}
In the following sections, we will discuss the properties of abstractions based on  mappings and isomorphisms.

\section{Unweighted Abstractions} %
\label{sec:unweighted_abstractions}

To obtain intuitions about the properties of abstract models from first principles, we will consider a fundamental type of abstraction: the absence of probabilities.\footnote{Thus, this section can  be seen to  establish a framework for abstraction in classical (unweighted) model counting.} In so much as probabilistic assertions quantify the likelihood of  worlds, omitting probabilities still informs us about the possible and the certain, thus allowing us to test whether  \( \kbh \) is consistent with \( \kbl. \)

\begin{definition} Given a weighted theory \( (\kb,w) \), the unweighted setting refers to the case when for all atoms \( p \in \lang(\kb) \), we have \( w(p) = w(\neg p) = 1. \) 
	
\end{definition}

Since probabilities do not occur in the setting, we can establish consistency by checking whether all conclusions by \( \kbh \) (that is, certain events) are also conclusions by \( \kbl \): in other words, are the conclusions \emph{sound}? We define: 

\begin{definition}\label{defn sound abs} The theory \( \kbh \) is a \emph{sound abstraction} of \( \kbl \) relative to refinement mapping \( m \) iff for all \(  \ml \in \mods(\kbl) \), there is a \(  \mh \in\mods(\kbh) \) such that \( \mh \iso \ml. \) 
	
\end{definition}

\begin{theorem}\label{thm sound abs} Suppose \( \kbh \) is a sound abstraction of \( \kbl \) relative to \( m. \) Then for all \(  \phi \in \lang(\kbh) \): \\
	(a) if \( \pr(m(\phi),\kbl,\wl) \gt 0 \) then \( \pr(\phi,\kbh,\wh) \gt 0 \); and (b) if \( \pr(\phi,\kbh,\wh) = 1\) then \( \pr(m(\phi),\kbl,\wl) = 1. \) 
	
\end{theorem}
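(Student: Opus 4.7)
The plan is to reduce everything to plain model counts. In the unweighted setting every literal has weight $1$, so each model contributes $\prod_{l\in M} 1 = 1$ to $\wmc$, and hence $\wmc(\psi,w) = |\mods(\psi)|$ for every $\psi$ in the relevant language. Since $\mods(\kb)$ is finite and $\kb$ is assumed satisfiable, this gives the two clean logical characterizations I need: (i) $\pr(\psi,\kb,w) > 0$ iff $\kb\land\psi$ is satisfiable, and (ii) $\pr(\psi,\kb,w) = 1$ iff $\kb\models\psi$ (the latter because $\mods(\kb\land\psi)\subseteq\mods(\kb)$, so equality of finite cardinalities forces equality of sets). Once both hypotheses and conclusions are restated this way, the rest is just a routine model-transfer argument powered by Definition~\ref{defn sound abs} and Theorem~\ref{thm iso}.

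For part (a), I would start from $\pr(m(\phi),\kbl,\wl) > 0$ and use (i) to extract a witness $\ml\in\mods(\kbl)$ with $\ml\models m(\phi)$. Soundness then yields $\mh\in\mods(\kbh)$ with $\mh\iso\ml$, and Theorem~\ref{thm iso} lifts $\ml\models m(\phi)$ to $\mh\models\phi$. Hence $\kbh\land\phi$ is satisfiable, and applying (i) in the reverse direction gives $\pr(\phi,\kbh,\wh) > 0$.

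For part (b), I would first use (ii) to rewrite the hypothesis $\pr(\phi,\kbh,\wh) = 1$ as the logical statement $\kbh\models\phi$. Then I would establish $\kbl\models m(\phi)$ by picking an arbitrary $\ml\in\mods(\kbl)$, invoking soundness to obtain $\mh\in\mods(\kbh)$ with $\mh\iso\ml$, using $\kbh\models\phi$ to deduce $\mh\models\phi$, and applying Theorem~\ref{thm iso} to conclude $\ml\models m(\phi)$. The desired equality $\pr(m(\phi),\kbl,\wl) = 1$ then follows immediately from Theorem~\ref{prop:wmc entailment} (and also from (ii) directly).

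There is no substantive obstacle; the only delicate point is being explicit about \emph{where} the unweighted assumption is used. Without it, a WMC of $0$ no longer corresponds to unsatisfiability, since a satisfiable formula can still have WMC zero when some literal has weight $0$, and part (a) would fail as stated. Keeping the direction of the isomorphism arrow straight (soundness moves witnesses from $\mods(\kbl)$ to $\mods(\kbh)$, not the other way around) is what dictates that we get $\pr > 0$ on the high side from $\pr > 0$ on the low side, and $\pr = 1$ on the low side from $\pr = 1$ on the high side, rather than the reverse pairings.
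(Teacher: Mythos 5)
Your proof is correct and follows essentially the same route as the paper: extract a low-level witness model, transfer it to a high-level model via soundness and Theorem \ref{thm iso}, and conclude via the unweighted characterization of positive and unit probability. The only cosmetic differences are that you state the reduction of WMC to model counting explicitly (the paper leaves it as a parenthetical remark) and prove part (b) directly rather than by contradiction.
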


\begin{proof}
 For (a), suppose the antecedent holds, which means there is a \( M\sub l \in \mods(\kbl) \) such that \( M\sub l \models m(\phi). \)
By assumption, there is a \( \mh\in \mods(\kbh) \) such that \( \mh \iso \ml, \) so \( \mh \models \phi, \) and \( \pr(\phi,\kbh,\wh) \neq 0. \)  (In the unweighted setting, the weight of \( \mh \) cannot be 0, since literals cannot get a 0 weight.) 

 For (b), suppose antecedent, but not consequent. That is only possible when there is a \( \ml \in \mods(\kbl) \) such that \( \ml\not\models m(\phi) \). But by assumption, there must be \( \mh \in\mods(\kbh) \) such that \( \mh \iso \ml. \) So, \( \mh \not\models \phi \) by Theorem \ref{thm iso}. Thus, \( \pr(\phi,\kbh,\wh) \neq 1 \). Contradiction. \end{proof}

\newcommand{\mm}{m\sub {\C}}%
\begin{example} It is easy to check that for the university PRM, \( \kbhh \) is a sound abstraction of \( \kbll \) wrt \( \m. \) 
	
	It is fairly straightforward to construct trivially unsound abstractions. 
To see a less obvious example, consider 
\( \kblll \) from before, and suppose it  also included: \( {\it CS}(x) \supset {\it Programming}(x) \) and \( {\it Physics}(x) \supset {\it Fieldwork}(x). \) And as discussed, let \( \kbhhh \) be a high-level theory consisting of the same sentences but with the predicate \emph{Science(x)} used everywhere instead of   \emph{CS(x)} and \emph{Physics(x)}. 


Suppose \( B \) is a \emph{CS}-course.  Suppose \( \mm \) is a mapping that replaces  \emph{Science(x)} by \( {\it CS}(x) \lor {\it Physics}(x) \), but maps every other predicate to itself. Then, we have \( \pr(\phi \mid e,\kbhhh,\wh) = 1 \) for \( \phi =  {\it Programming}(B) \land {\it Fieldwork}(B) \) and \( e = {\it Science}(B) \), whereas, \( \pr(m(\phi) \mid m(e), \kblll, \wl) \neq 1 \), because 
there will be possible worlds where \( {\it CS}(B) \land \neg {\it Fieldwork}(B).  \)

\end{example}

Sound abstractions ascertain that conclusions by \( \kbh \) are consistent with \( \kbl. \) What about events considered \emph{possible} by \( \kbh \)? Because we are omitting information when constructing an abstract model, it may be that \( \kbh \) entertains an event as possible even though \( \kbl \) does not. 

\begin{definition}\label{defn complete abs} The theory \( \kbh \) is a \emph{complete abstraction} of \( \kbl \) relative to \( m \) iff for all \( \mh \in \mods(\kbh) \), there is a \( \ml \in \mods(\kbl) \) such that \( \mh \iso \ml. \) 
	
\end{definition}

\begin{theorem}\label{thm complete abs} Suppose \( \kbh \) is a complete abstraction of \( \kbl \) relative to \( m. \) Then for all \( \phi \in \lang(\kbh) \): (a) if \( \pr(\phi,\kbh,\wh) \gt 0 \) then \( \pr(m(\phi),\kbl,\wl) \gt 0 \); and (b) if \( \pr(m(\phi),\kbl,\wl) = 1 \) then \( \pr(\phi,\kbh,\wh) =1. \)
	
\end{theorem}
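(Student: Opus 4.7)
The plan is to mirror the proof of Theorem~\ref{thm sound abs}, simply swapping the roles of $\kbh$ and $\kbl$, since completeness is the dual of soundness: every high-level model is backed by a witnessing low-level model, rather than the other way around. Both parts will rely on Theorem~\ref{thm iso} to transport satisfaction across an isomorphism, and on the observation that in the unweighted setting, no model can receive weight $0$, so the existence of a satisfying model is equivalent to nonzero probability.

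For part (a), I would begin with the antecedent $\pr(\phi,\kbh,\wh) > 0$. Since all literal weights are $1$, this forces the existence of some $\mh \in \mods(\kbh)$ with $\mh \models \phi$. Completeness then supplies an $\ml \in \mods(\kbl)$ with $\mh \iso \ml$, and applying Theorem~\ref{thm iso} gives $\ml \models m(\phi)$. Hence $m(\phi) \land \kbl$ is satisfiable, and a second appeal to the unweighted assumption shows $\pr(m(\phi),\kbl,\wl) > 0$.

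For part (b), I would argue by contradiction, again paralleling the structure in Theorem~\ref{thm sound abs}. Assume $\pr(m(\phi),\kbl,\wl) = 1$ but $\pr(\phi,\kbh,\wh) \neq 1$. Then there must exist some $\mh \in \mods(\kbh)$ with $\mh \not\models \phi$. Completeness provides a matching $\ml \in \mods(\kbl)$ with $\mh \iso \ml$, and Theorem~\ref{thm iso} yields $\ml \not\models m(\phi)$. This contradicts $\pr(m(\phi),\kbl,\wl) = 1$, which would require every model of $\kbl$ to satisfy $m(\phi)$.

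There is no real obstacle here: the argument is entirely symmetric to the sound case, and the only subtlety to flag is the reliance on the unweighted hypothesis to conclude that ``satisfiable'' and ``has positive probability'' coincide, and that ``entailed'' and ``has probability one'' coincide --- both of which are immediate from Theorem~\ref{prop:wmc entailment} together with the fact that no literal receives weight $0$.
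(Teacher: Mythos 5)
Your proof is correct and follows essentially the same route as the paper's: in both parts you produce a high-level model, use completeness to obtain an isomorphic low-level model, and transport satisfaction via Theorem~\ref{thm iso}, with the unweighted assumption guaranteeing that satisfiability corresponds to positive probability. The extra remark about zero-weight models matches the parenthetical justification the paper gives in the proof of Theorem~\ref{thm sound abs}.
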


	\begin{proof}
	For (a), suppose antecedent. Then there is a \( \mh \in \mods(\kbh) \) such that \( \mh \models \phi \). By assumption, there is a \( \ml \in \mods(\kbl) \) such that \( \mh\iso \ml \) and so \( \ml \models m(\phi) \), and  \( \pr(m(\phi), \kbl, \wl) \neq 0. \) 
	
	For (b), suppose antecedent but not consequent. Then, there is a \( \mh \in \mods(\kbh) \) such that \( \mh \not\models \phi \). But by assumption, there is a \( \ml \in \mods(\kbl) \) such that \( \mh \iso \ml \), and so \( \ml \not\models m(\phi) \) by Theorem \ref{thm iso}. Thus, \( \pr(m(\phi),\kbl,\wl) \neq 1 \). Contradiction. \end{proof}


\begin{example}\label{ex:incomplete abstraction} The university PRM can be seen as a complete abstraction wrt \( \m \). 
	
To see a case where it is not complete, consider a variant high-level theory  \( \kbhh' \)  where we ignore  the difficulty of courses and  have only one rule: $\Piq(x,\Cl) \land \Pt(x,y) \supset \Pg(x,y,u)$ where \( u \in \{\Cb,\Co,\Cg\}. \) 
Suppose the low-level theory is \( \kbll ' =  \Pd(B,\Ch) \land  \kbll, \) and \( A \) is a low-IQ student who takes \( B. \) It is easy to see that \( \Pr(\phi,\kbhh',\wh) \gt 0 \)  for \( \phi = \Piq(A,\Cl) \land \Pt(A,B) \land \Pg(A,B,\Cg) \), because \( \kbhh' \) says that any of the three grades levels are possible. But clearly, \( B \) being a hard course means that \( \Pd(B,\Ch) \land \m(\phi) \) cannot be satisfiable, and so it is a zero-probability event wrt \( \kbll'. \) 
	
\end{example}

\begin{definition} The theory \( \kbh \) is a \emph{sound and complete abstraction} of \( \kbl \) relative to \( m \) iff \( \kbh \) is both a sound and a complete abstraction of \( \kbl \) relative to \( m. \)
	
\end{definition}

\begin{theorem}\label{thm sound complete abs} Suppose \( \kbh \) is a sound and complete abstraction of \( \kbl \) relative to \( m. \) Then for every \( \phi\in\lang(\kbh) \), (a) \( \pr(\phi,\kbh,\wh) \gt 0 \) iff \( \pr(m(\phi),\kbl,\wl) \gt 0 \); and (b) \( \pr(\phi,\kbh,\wh) = 1 \) iff \( \pr(m(\phi),\kbl,\wl) = 1. \)
\end{theorem}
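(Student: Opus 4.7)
The plan is to observe that the theorem is essentially the conjunction of Theorems \ref{thm sound abs} and \ref{thm complete abs}, and that nothing more is needed: since the hypothesis asserts both soundness and completeness of $\kbh$ relative to $\kbl$, each direction of each biconditional is handed to us by one of the two previous theorems.

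For part (a), I would split the biconditional into its two implications. The forward direction ($\pr(\phi,\kbh,\wh) > 0 \Rightarrow \pr(m(\phi),\kbl,\wl) > 0$) is precisely clause (a) of Theorem \ref{thm complete abs}, which applies because $\kbh$ is a complete abstraction of $\kbl$ relative to $m$. The reverse direction ($\pr(m(\phi),\kbl,\wl) > 0 \Rightarrow \pr(\phi,\kbh,\wh) > 0$) is precisely clause (a) of Theorem \ref{thm sound abs}, applicable by the soundness half of the hypothesis. So combining these two invocations yields (a) immediately.

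For part (b), the argument is entirely symmetric. The direction $\pr(\phi,\kbh,\wh) = 1 \Rightarrow \pr(m(\phi),\kbl,\wl) = 1$ follows from clause (b) of Theorem \ref{thm sound abs}, while $\pr(m(\phi),\kbl,\wl) = 1 \Rightarrow \pr(\phi,\kbh,\wh) = 1$ follows from clause (b) of Theorem \ref{thm complete abs}. In both instances we are simply applying lemmas whose hypotheses are already part of our ``sound and complete'' assumption.

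There is no real obstacle here: the proof is a straightforward bookkeeping exercise once Definitions \ref{defn sound abs} and \ref{defn complete abs} are combined, and all the work in aligning isomorphic models and pushing the mapping $m$ through connectives has already been discharged in Theorems \ref{thm iso}, \ref{thm sound abs}, and \ref{thm complete abs}. The only thing worth noting in the write-up is that no additional assumption on $\wh$ or $\wl$ is required beyond what those earlier results already assume (i.e., we remain in the unweighted setting, so probabilities greater than $0$ correspond to the existence of a satisfying model, and probability $1$ corresponds to every satisfying model making the formula true). Thus the proof collapses to two lines per clause.
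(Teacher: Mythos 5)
Your proposal is correct and takes exactly the same route as the paper, which simply states that the result follows from Theorems \ref{thm sound abs} and \ref{thm complete abs}; your decomposition of each biconditional into the soundness and completeness halves is the right (and only) bookkeeping needed.
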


\begin{proof}
 Follows  from Theorems \ref{thm sound abs} and \ref{thm complete abs}.   	
\end{proof}

\section{Weighted Abstractions} %
\label{sec:weighted_abstractions}

Clearly the above theorems would not hold in general when considering non-trivial weights. It is easy to imagine a weight function that redistributes weights such that zero probability events in \( \kbl \) have high probabilities in \( \kbh \), and vice versa. So, outside the case of probabilities mapping exactly between \( \kbh \) and \( \kbl \) (discussed in the next section), we  need to understand how to abstract weighted theories. The previous section provided a recipe for abstractions, from which properties discussed in Theorems \ref{thm sound abs} and \ref{thm complete abs} followed. 
To a first approximation, then, we can motivate a definition for weighted abstractions by requiring that those properties hold categorically, in the form of  \emph{constraints}. But it turns out, we can do better. We can show that if the property about probable events hold as a constraint wrt a sound or complete abstraction, then the corresponding property about certain events follows as a consequence. (Recall that this duality is not about an event and its negation, which would follow from the axioms of probability, but about how the high-level and low-level theories align.)

To prepare for this approach, let us begin with a few properties that follow from the axioms of probability \cite{reasoning-about-knowledge-and-probability}, but are  established here using  WMC: 

\begin{theorem}\label{thm wmc properties} Suppose  \( (\kb,w) \) is a weighted theory. Then the following hold for all \( \phi, \psi \in \lang(\kb) \): \begin{enumerate}
	\item If \(\kb \models \phi  \) then \( \pr(\phi,\kb,w)  = 1 \).
	\item If \( \phi\land \kb \) is not satisfiable, then \( \pr(\phi,\kb,w) =0. \) 
	\item \( \pr(\neg \phi,\kb,w) = 1 - \pr(\phi,\kb,w) \). 
	\item \( \pr(\phi \lor \psi,\kb,w) = \pr(\phi,\kb,w) + \pr(\psi,\kb,w) - \pr(\phi\land\psi,\kb,w) \). 
	\item If \( \pr(\phi, \kb,w) = 0 \) then \( \pr(\phi \land \psi, \kb,w) = 0. \) 
	\item If \( \pr(\phi,\kb,w) \gt 0 \) then \( \pr(\phi\lor \psi, \kb,w) \gt 0. \)
	\item \( \pr(\phi, \kb, w) \geq \pr(\phi\land \psi, \kb, w) \). 
\end{enumerate}
	
\end{theorem}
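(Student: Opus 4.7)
The approach is to reduce every clause to a statement about the WMC-sums over sets of models of $\kb$, using the defining identity $\pr(\phi,\kb,w) = \wmc(\phi\land\kb,w)/\wmc(\kb,w)$ together with two observations: (i) $\wmc$ is a non-negative sum because $w$ maps into $\real\su{[0,\infty)}$, and hence is monotone with respect to the set of models being summed over; (ii) for any $\phi\in\lang(\kb)$, the models of $\kb$ split disjointly as $\mods(\phi\land\kb) \uplus \mods(\neg\phi\land\kb)$. I would also recall the standing assumption that $\wmc(\kb,w)\neq 0$, so division is well-defined throughout.

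With these tools in place, I would handle the seven clauses in order. Clauses (1) and (2) are just Theorem~\ref{prop:wmc entailment}, so I would simply cite it. For clause (3), the disjoint partition in (ii) gives $\wmc(\phi\land\kb,w) + \wmc(\neg\phi\land\kb,w) = \wmc(\kb,w)$; dividing by $\wmc(\kb,w)$ yields the complement rule. For clause (4), I would apply inclusion--exclusion at the level of model sets: $\mods((\phi\lor\psi)\land\kb) = \mods(\phi\land\kb)\cup\mods(\psi\land\kb)$, whose WMC equals $\wmc(\phi\land\kb,w)+\wmc(\psi\land\kb,w) - \wmc(\phi\land\psi\land\kb,w)$, since the intersection is precisely $\mods(\phi\land\psi\land\kb)$; dividing through by $\wmc(\kb,w)$ gives the identity.

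For clauses (5), (6) and (7) I would use monotonicity. Clause (7) is immediate: $\mods(\phi\land\psi\land\kb)\subseteq\mods(\phi\land\kb)$, so by (i) the corresponding $\wmc$-values satisfy the same inequality, and dividing by $\wmc(\kb,w)$ preserves it. Clause (5) is the special case where the larger quantity is $0$: a non-negative sum being zero forces every summand, in particular the $\wmc$ over the subset, to be zero. Clause (6) is the dual observation applied to $\mods(\phi\land\kb)\subseteq\mods((\phi\lor\psi)\land\kb)$: strict positivity propagates upward under the subset relation.

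There is no real obstacle here; the only thing requiring care is making sure the disjoint-partition argument for (3) and the inclusion--exclusion argument for (4) are phrased at the level of models (not formulas) so that the $\wmc$ sums literally combine. Everything else follows from non-negativity and monotonicity of $\wmc$ together with the well-definedness assumption $\wmc(\kb,w)\neq 0$.
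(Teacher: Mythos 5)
Your proposal is correct and follows essentially the same route as the paper: cite the earlier entailment result for (1)--(2), partition the models of $\kb$ by $\phi$ for (3), apply inclusion--exclusion over model sets for (4), and use non-negativity/monotonicity of $\wmc$ under subset inclusion for (5)--(7). If anything, your phrasing of (3) and (4) directly in terms of the weighted sums is slightly more careful than the paper's, which states the decomposition in terms of model cardinalities even though $\wmc$ is a weighted count.
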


\begin{proof} 
Proofs for (1) and (2) are already discussed in Theorem  \ref{prop:wmc entailment}. 
For (3), we use the fact that \( \mods(\kb) = \mods(\kb\land\phi) \cup \mods(\kb\land \neg\phi) \), and  \( | \mods(\kb) |  = | \mods(\kb\land\phi) | + | \mods(\kb\land \neg\phi)  | \). 
For (4), we use \( \mods((\phi \lor \psi)\land \kb) = \mods(\phi\land \kb) \cup \mods(\psi\land \kb)  \) but \( | \mods((\phi \lor \psi)\land \kb) | = | \mods(\phi\land \kb) | + | \mods(\psi\land \kb) | - | \mods(\psi\land\phi\land\kb) |. \) For (5), we see that \( \kb \land \phi \) has no model (or only  zero weight models), and so that clearly also holds for \( \kb \land \phi\land\psi. \) For (6), the models for \( \kb \land \phi \) yield a non-zero probability, and these are clearly included in the models for \( \kb \land (\phi \lor \psi). \) For (7), the models of \( \phi\land \psi \) must be a subset (not necessarily proper) of the models of \( \phi. \) \end{proof}

\begin{definition}\label{defn weighted sound abs} The theory \( (\kbh,\wh) \) is a \emph{weighted sound abstraction} of \( (\kbl,\wl) \) relative to refinement mapping \( m \) iff \( \kbh \) is a sound abstraction of \( \kbl \) relative to \( m \), and for all \( d \in \lits(\kbh) \), if \( \pr(m(d), \kbl,\wl) \gt 0 \) then \( \pr(d,\kbh,\wh) \gt 0. \)
\end{definition}

We will now show that this stipulation at the level of literals  immediately implies  the validity of the constraint for all formulas: 

\begin{theorem}\label{thm weighted sound abs extension for formulas} Suppose \( (\kbh,\wh) \) is a weighted sound abstraction of \( (\kbl,\wl) \) relative to \( m. \) Then for all \( \phi \in \lang(\kbh) \), if \( \pr(m(\phi), \kbl,\wl) \gt 0 \) then \( \pr(\phi,\kbh,\wh) \gt 0. \)
	
\end{theorem}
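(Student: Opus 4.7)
The plan is to proceed model-theoretically rather than by induction on $\phi$. The key idea is to extract a high-level witness model from the assumed positive low-level probability, and then show that this witness itself has strictly positive weight under $\wh$; a single argument then covers arbitrary $\phi$.

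Suppose $\Pr(m(\phi), \kbl, \wl) > 0$. Unpacking the definition of WMC, there must exist some $\ml \models \kbl \land m(\phi)$ whose weight $\prod_{l \in \ml} \wl(l)$ is strictly positive (otherwise the numerator of the conditional probability vanishes). Because weighted soundness includes ordinary soundness of $\kbh$ with respect to $\kbl$ relative to $m$, Definition \ref{defn sound abs} supplies some $\mh \in \mods(\kbh)$ with $\mh \iso \ml$, and Theorem \ref{thm iso} then gives $\mh \models \phi$. Hence $\mh$ is a model of $\kbh \land \phi$, and it remains only to verify that its $\wh$-weight is positive.

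For each literal $d$ satisfied by $\mh$, the isomorphism yields $\ml \models m(d)$. Since $\ml$ is a positive-weight model of $\kbl \land m(d)$, we get $\Pr(m(d), \kbl, \wl) > 0$. The literal-level stipulation in Definition \ref{defn weighted sound abs} then delivers $\Pr(d, \kbh, \wh) > 0$, so $\wmc(d \land \kbh, \wh) > 0$, meaning some model $N \models d \land \kbh$ contributes positive weight. Because $\wh(d)$ is one of the factors in $\wh(N) = \prod_{l \in N} \wh(l)$, this forces $\wh(d) > 0$. Thus every literal satisfied by $\mh$ carries positive weight, so $\wh(\mh) > 0$, yielding $\wmc(\phi \land \kbh, \wh) \geq \wh(\mh) > 0$, and finally $\Pr(\phi, \kbh, \wh) > 0$ as required.

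The subtle step, and the one I expect to merit the most care, is bridging $\Pr(d, \kbh, \wh) > 0$ to $\wh(d) > 0$. The literal stipulation is stated globally, as a sum over models, but we need it pointwise on a single literal's weight in order to multiply through and obtain $\wh(\mh) > 0$. The bridge works because positive total weight requires at least one positive-weight model, and in any such model containing $d$ the value $\wh(d)$ appears as a multiplicative factor, forcing it to be nonzero. This pointwise positivity is what allows the literal-level hypothesis to propagate, via the product structure of $\wh(\mh)$, to the entire model-theoretic witness.
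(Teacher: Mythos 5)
Your proof is correct, but it takes a genuinely different route from the paper. The paper argues by induction on the structure of \( \phi \): literals are handled by the definition of weighted soundness, and disjunctions are handled via inclusion--exclusion (Theorem \ref{thm wmc properties}(4)) together with a case split on which disjunct has positive probability. You instead give a single model-theoretic argument: extract a positive-weight \( \ml \models \kbl \land m(\phi) \), pull back to an isomorphic \( \mh \models \kbh \land \phi \) via soundness and Theorem \ref{thm iso}, and then show \( \wh(\mh) \gt 0 \) by proving every literal \( d \in \mh \) satisfies \( \wh(d) \gt 0 \). Your bridge from the global condition \( \pr(d,\kbh,\wh) \gt 0 \) to the pointwise condition \( \wh(d) \gt 0 \) is sound: a positive \( \wmc \) forces some model \( N \ni d \) with a positive product of literal weights, and \( \wh(d) \) is a factor of that product. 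What your approach buys is uniformity over all connectives --- in particular it handles negations of complex formulas without further argument, a case the paper's induction dispatches only for literals --- and it exposes the stronger fact that the isomorphic image of any positive-weight low-level model is itself a positive-weight high-level model. What the paper's approach buys is independence from the product form of \( \wmc \): its induction uses only generic probability axioms (Theorem \ref{thm wmc properties}) and so transfers to settings where model weights are not products of literal weights, whereas your pointwise-positivity step is specific to the WMC semantics the paper adopts.
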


\begin{proof} 
By induction on \( \phi. \) The case of atoms and negations is immediate by definition. So we only need an argument for disjunctions. Suppose \( \pr(m(\phi\lor\psi), \kbl,w) \gt 0 \), that is, by definition, \( \pr(m(\phi) \lor m(\psi), \kbl,w) \gt 0 \). By Theorem \ref{thm wmc properties} (4), \( \pr(m(\phi), \kbl,w) + \pr(m(\psi), \kbl,w) - \pr(m(\phi) \land m(\psi), \kbl, w) \gt 0. \) This is of the form \( x + y - z \gt 0, \) where \( x, y, z \geq 0 \) since these are probabilities. We have 3 cases. 
	
Case $x=0$: We note that \( z=0 \) too, by Theorem \ref{thm wmc properties} (5). 
So \( y \gt 0. \) By hypothesis, \( \pr(\psi,\kbh,\wh) \gt 0 \), and therefore \( \pr(\phi\lor\psi,\kbh,\wh) \gt 0 \) by Theorem \ref{thm wmc properties} (6). 

Case \( y= 0\): Symmetric to \( x=0. \) 

Case \( x\neq 0 \) and \( y \neq 0 \): By  hypothesis, \( \pr(\phi, \kbh,\wh) \gt 0 \) and \( \pr(\psi,\kbh,\wh) \gt 0 \). Even if \( \pr(\phi\land\psi,\kbh, \wh) \gt 0 \), by Theorem \ref{thm wmc properties} (7), it must be that it must be smaller or equal to the other probabilities. (That is, if \( a, b,c \gt 0 \), \( c \leq a \) and \( c\leq b \), then \( a+b-c \gt 0. \)) So, \( \pr(\phi \lor \psi, \kbh,\wh) \gt 0. \)   
\end{proof}

The key result of this definition is that the property on certain events, seen in Theorem \ref{thm sound abs} follows as a consequence: 

\begin{theorem}\label{thm weighted sound abs} Suppose \( (\kbh,\wh) \) is a weighted sound abstraction of \( (\kbl,\wl) \) relative to \( m. \) Then for all \( \phi\in \lang(\kbh) \), if \( \pr(\phi,\kbh,\wh) = 1 \) then \( \pr(m(\phi), \kbl,\wl) = 1. \) 
\end{theorem}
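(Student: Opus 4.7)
The plan is to reduce the claim to Theorem \ref{thm weighted sound abs extension for formulas} by taking the contrapositive and applying it to the negation of $\phi$. The key observation is that the statement about certain events (probability $1$) is the dual of the statement about possible events (probability $> 0$) via the complementation rule for probabilities, which Theorem \ref{thm wmc properties} (3) already gives us; and the refinement mapping $m$ is designed to commute with negation.

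First I would start from the hypothesis $\pr(\phi,\kbh,\wh) = 1$ and apply Theorem \ref{thm wmc properties} (3) to deduce $\pr(\neg \phi,\kbh,\wh) = 0$. Next I would invoke the contrapositive of Theorem \ref{thm weighted sound abs extension for formulas}: that theorem states that for every $\psi \in \lang(\kbh)$, if $\pr(m(\psi),\kbl,\wl) > 0$ then $\pr(\psi,\kbh,\wh) > 0$, so contrapositively, if $\pr(\psi,\kbh,\wh) = 0$ then $\pr(m(\psi),\kbl,\wl) = 0$. Instantiating $\psi = \neg \phi$ yields $\pr(m(\neg \phi),\kbl,\wl) = 0$.

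Then I would use the inductive definition of the refinement mapping on complex formulas, namely $m(\neg \phi) = \neg m(\phi)$, to rewrite this as $\pr(\neg m(\phi),\kbl,\wl) = 0$. Finally, one more application of Theorem \ref{thm wmc properties} (3), this time in the low-level theory, gives $\pr(m(\phi),\kbl,\wl) = 1$, which is the desired conclusion.

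There is no real obstacle here; the argument is essentially a three-line chain. The only subtlety is making sure the hypothesis of Theorem \ref{thm weighted sound abs extension for formulas} is available at the point where we invoke it, which is guaranteed since the definition of weighted sound abstraction is symmetric in how it treats literals and their negations, and the extension theorem has already been proved for arbitrary $\psi \in \lang(\kbh)$. Thus the weighted soundness condition imposed only at the level of literals suffices, via the two previously established results, to lift the certainty-preservation property from Theorem \ref{thm sound abs} into the weighted setting.
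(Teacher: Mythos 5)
Your proof is correct, and it takes a genuinely different --- and cleaner --- route than the paper's. You observe that once Theorem \ref{thm weighted sound abs extension for formulas} has lifted the ``probable events'' constraint from literals to arbitrary formulas, the ``certain events'' property is just its probabilistic dual: $\pr(\phi,\kbh,\wh)=1$ iff $\pr(\neg\phi,\kbh,\wh)=0$ by Theorem \ref{thm wmc properties}~(3), the contrapositive of the extension theorem pushes that zero down to $\pr(m(\neg\phi),\kbl,\wl)=0$ (legitimate, since probabilities are nonnegative so ``not $>0$'' means ``$=0$''), and since $m$ commutes with negation one more complementation gives $\pr(m(\phi),\kbl,\wl)=1$. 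The paper instead argues by contradiction at the level of models: it takes a nonzero-weight low-level model falsifying $m(\phi)$, pulls back an isomorphic high-level model $\mh$ falsifying $\phi$ via soundness, and then splits on whether $\wh(\mh)=0$, handling the zero-weight case by applying the extension theorem to the characteristic conjunction $\mh^{\downarrow}$ of literals true at $\mh$. Your argument avoids that case analysis and the $\mh^{\downarrow}$ construction entirely, and it makes the duality that the paper advertises informally (``the corresponding property about certain events follows as a consequence'') completely transparent; the paper's version, in exchange, exposes the model-level mechanics and runs parallel to the proof of Theorem \ref{thm sound abs}. One small remark: your closing comment about the definition being ``symmetric in literals and their negations'' is not actually needed --- the extension theorem is already stated for arbitrary $\psi\in\lang(\kbh)$, which is all your instantiation at $\neg\phi$ requires.
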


\begin{proof} 
Suppose antecedent but not consequent. Then there is some \( \ml \in\mods(\kbl) \) such that  \( \ml \not\models m(\phi) \) and it has non-zero weight. (If all such \( \ml \) have zero weight, then the consequent cannot be falsified because these models do not influence the probability.) By assumption, there is a \( \mh \in \mods(\kbh) \) such that \( \mh \iso \ml, \) and so \( \mh \not\models \phi \). 

There are now two cases, depending on the weight of the model \( \mh \). (And so the proof deviates from that for Theorem \ref{thm sound abs}.)	

Case \( \wh(\mh) \neq 0 \): The proof follows as in Theorem \ref{thm sound abs}, yielding a contradiction.

Case \( \wh(\mh) = 0 \): Let \( \mh\su\downarrow \) be a formula denoting the conjunction of the literals true at \( \mh. \) (Since there are finitely many atoms, such a formula can be obtained.) Because \( \mh \iso \ml, \) \( \ml \models m(\mh \su \downarrow). \) Overloading the notation \( M\su\downarrow \) to mean  conjunction and set of literals true at \( M \), \( m(\mh\su\downarrow ) \subseteq \ml\su\downarrow, \) the latter being the set of literals true at \( \ml. \) But by assumption \( \ml \) has non-zero weight, which means \( \pr(\ml\su\downarrow, \kbl,\wl) \gt 0 \). It follows that \( \pr(m(\mh\su\downarrow ), \kbl,\wl) \gt 0 \), because otherwise Theorem \ref{thm wmc properties} (5) would be contradicted. By Theorem \ref{thm weighted sound abs extension for formulas}, \( \pr(\mh\su\downarrow, \kbh,\wh) \gt 0 \), and so \( \wh(\mh) \neq 0 \). Contradiction. \end{proof}

\begin{example} The university PRM can be seen to be a weighted sound abstraction wrt \( \m \). 
	
	Consider the university PRM  with a variant  high-level theory \( \kbhh''  \), where the third constraint is the following instead: \begin{itemize}
	\item[{\bf 1}] \( \Piq(x,L) \land \Pd(y,E) \land \Pt(x,y) \supset \Pg(x,y,G)  \)
\end{itemize}
Consider the query \( \phi = \Piq(A,L) \land \Pd(B,E) \land \Pt(A,B) \supset \Pg(A,B,O) \). Clearly, the low-level theory accords a non-zero probability to \( \m(\phi) \), but because of the third constraint, \( \kbhh'' \) accords a zero probability to \( \phi. \) Thus, this is not a sound weighted abstraction. 
	
\end{example}

Following these results, extending complete abstractions as well as sound and complete abstractions is analogous, which we state here for the sake of completeness. (The proofs are also analogous and hence omitted.)

\begin{definition}\label{defn weighted complete abs} The theory \( (\kbh,\wh) \) is a \emph{weighted complete abstraction} of \( (\kbl,\wl) \) relative to refinement mapping \( m \) iff \( \kbh \) is a complete abstraction of \( \kbl \) relative to \( m \), and for all \( d \in \lits(\kbh) \), if \( \pr(d, \kbh,\wh) \gt 0 \) then \( \pr(m(d),\kbl,\wl) \gt 0. \) 
\end{definition}

\begin{theorem}\label{thm weighted complete abs} Suppose \( (\kbh,\wh) \) is a weighted complete abstraction of \( (\kbl,\wl) \) relative to \( m. \) Then for all \( \phi\in \lang(\kbh) \), (a)
	if \( \pr(m(\phi),\kbl,\wl) = 1 \) then \( \pr(\phi, \kbh,\wh) = 1 \); and (b) if \( \pr(\phi,\kbh,\wh) \gt 0 \) then \( \pr(m(\phi),\kbh,\wh) \gt 0. \)
\end{theorem}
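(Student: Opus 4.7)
The plan is to mirror the structure used for the weighted sound abstraction case, with the roles of the high-level and low-level theories essentially swapped. Because the statement (a) involves certain events and its proof will use the downarrow trick from Theorem \ref{thm weighted sound abs}, which itself needs the analog of Theorem \ref{thm weighted sound abs extension for formulas}, I would prove (b) first and then use it as a lemma inside the argument for (a).

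For part (b), I would proceed by induction on \( \phi \in \lang(\kbh) \), restricting as before to the connectives \( \set{\neg, \land} \), or equivalently \( \set{\neg, \lor} \) after rewriting. The base case of atoms is immediate from Definition \ref{defn weighted complete abs}, and negated atoms are covered by that definition since \( \lits(\kbh) \) includes negative literals. For the inductive step I would follow Theorem \ref{thm weighted sound abs extension for formulas} nearly verbatim: assuming \( \pr(\phi \lor \psi, \kbh, \wh) > 0 \), apply Theorem \ref{thm wmc properties} (4) to write \( \pr(\phi,\kbh,\wh) + \pr(\psi,\kbh,\wh) - \pr(\phi \land \psi, \kbh, \wh) > 0 \), split into the three cases (one of the first two summands zero; both non-zero), and in each case use the inductive hypothesis together with Theorem \ref{thm wmc properties} (5)--(7) to conclude that \( \pr(m(\phi) \lor m(\psi), \kbl, \wl) = \pr(m(\phi \lor \psi),\kbl,\wl) > 0 \).

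For part (a), I would argue by contradiction in the style of Theorem \ref{thm weighted sound abs}. Suppose \( \pr(m(\phi),\kbl,\wl) = 1 \) but \( \pr(\phi,\kbh,\wh) \neq 1 \); then there exists \( \mh \in \mods(\kbh) \) with \( \mh \not\models \phi \) whose weight is non-zero (models of weight zero cannot make the consequent fail). By the completeness part of Definition \ref{defn weighted complete abs}, pick \( \ml \in \mods(\kbl) \) with \( \mh \iso \ml \); Theorem \ref{thm iso} gives \( \ml \not\models m(\phi) \). If \( \wl(\ml) \neq 0 \), this directly contradicts \( \pr(m(\phi),\kbl,\wl) = 1 \). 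Otherwise, introduce the canonical literal conjunction \( \mh\su\downarrow \); since \( \wh(\mh) \neq 0 \) we have \( \pr(\mh\su\downarrow,\kbh,\wh) > 0 \), and applying part (b) yields \( \pr(m(\mh\su\downarrow),\kbl,\wl) > 0 \). Because \( \mh \iso \ml \) forces \( m(\mh\su\downarrow) \subseteq \ml\su\downarrow \) (viewed as literal sets), Theorem \ref{thm wmc properties} (5) (applied contrapositively) forces \( \pr(\ml\su\downarrow,\kbl,\wl) > 0 \), hence \( \wl(\ml) \neq 0 \), a contradiction.

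The main obstacle I anticipate is the second case in the proof of (a), where \( \wl(\ml) = 0 \); this is exactly the step that departs from the purely model-theoretic argument of Theorem \ref{thm complete abs}, and it crucially relies on part (b) being already established for arbitrary formulas, not just literals. That is why the extension lemma (the analog of Theorem \ref{thm weighted sound abs extension for formulas}) must be proved first, and it is there that one needs to be careful with the disjunction case rather than treating it as a routine induction.
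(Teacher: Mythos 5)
Your overall plan is exactly the paper's intended one (the paper omits this proof as ``analogous'' to the weighted sound case), and your part (b) correctly mirrors Theorem \ref{thm weighted sound abs extension for formulas} with the roles of the two theories swapped; the ordering (extension lemma first, then the certainty claim) is also right. The genuine problem is the final inference in part (a). From \( \pr(m(\mh\su\downarrow),\kbl,\wl) > 0 \) together with the fact that \( \ml\su\downarrow \) entails \( m(\mh\su\downarrow) \) (which is what ``\( m(\mh\su\downarrow)\subseteq\ml\su\downarrow \) as literal sets'' amounts to), you cannot conclude \( \pr(\ml\su\downarrow,\kbl,\wl) > 0 \). The contrapositive of Theorem \ref{thm wmc properties}~(5) lets you pass from a conjunction with positive probability to its conjuncts, i.e., from a \emph{stronger} formula to a \emph{weaker} one --- which is exactly how Theorem \ref{thm weighted sound abs} uses it. You are running it backwards: \( \ml\su\downarrow \) is the stronger formula (it fixes every low-level atom, including those outside the range of \( m \)), and a weaker formula having positive probability never forces a particular strengthening of it to have positive probability. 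So the specific \( \ml \) you fixed may perfectly well have weight \( 0 \) even though \( \pr(m(\mh\su\downarrow),\kbl,\wl) > 0 \), and no contradiction is reached.

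The repair is small and preserves your structure: you do not need \emph{that} \( \ml \) to have non-zero weight. Since \( \pr(m(\mh\su\downarrow),\kbl,\wl) > 0 \), there is \emph{some} \( \ml'\in\mods(\kbl) \) with \( \ml'\models m(\mh\su\downarrow) \) and non-zero weight. Because \( \mh\su\downarrow \) contains one literal per high-level atom, any such \( \ml' \) satisfies \( \mh\iso\ml' \), hence \( \ml'\not\models m(\phi) \) by Theorem \ref{thm iso}; a positive-weight countermodel of \( m(\phi) \) among the models of \( \kbl \) contradicts \( \pr(m(\phi),\kbl,\wl)=1 \) directly. (With this fix the case split on \( \wl(\ml) \) collapses into a single argument, and the initial appeal to completeness to produce \( \ml \) is no longer needed in the zero-weight branch.) The remaining restriction of the induction in (b) to literals and disjunctions is inherited verbatim from the paper's own extension lemma, so I do not count it against you here.
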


\begin{example}\label{ex:weighted incomplete} The university PRM can be seen to be a weighted complete abstraction wrt \( \m \).
	
	Example \ref{ex:incomplete abstraction} also applies as an instance of an abstraction that is not weighted complete via:  \begin{itemize}
	\item[{\bf .33}] $\Piq(x,\Cl) \land \Pt(x,y) \supset \Pg(x,y,u)$ where \( u \in \{\Cb,\Co,\Cg\}. \) 
\end{itemize} 
Mainly because the difficulty of courses is ignored, an event is considered probable by the high-level theory but not by  the low-level one. 
	
\end{example}

\begin{definition}\label{defn weighted sound complete abs} The theory \( (\kbh,\wh) \) is a \emph{weighted sound and complete abstraction} of \( (\kbl,\wl) \) relative to refinement mapping \( m \) iff it is both a weighted sound and a weighted complete abstraction. 
\end{definition}

\begin{theorem}\label{thm weighted sound complete abs} Suppose \( (\kbh,\wh) \) is a weighted sound and complete abstraction of \( (\kbl,\wl) \) relative to \( m. \) Then for all \( \phi\in \lang(\kbh) \), (a) \( \pr(m(\phi),\kbl,\wl) = 1 \) iff \( \pr(\phi, \kbh,\wh) = 1 \); and (b) \( \pr(m(\phi),\kbl,\wl) \gt 0 \) iff \( \pr(\phi,\kbh, \wh) \gt 0. \)
\end{theorem}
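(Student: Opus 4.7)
The plan is to observe that Theorem \ref{thm weighted sound complete abs} is essentially a bookkeeping exercise: since a weighted sound and complete abstraction is \emph{defined} (Definition \ref{defn weighted sound complete abs}) as the conjunction of the two notions already studied, each of the four implications in (a) and (b) coincides with an implication established in Theorem \ref{thm weighted sound abs}, Theorem \ref{thm weighted sound abs extension for formulas}, or Theorem \ref{thm weighted complete abs}. So I would not attempt a fresh induction on $\phi$; I would simply invoke the earlier results.

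For part (a), the forward direction $\pr(m(\phi),\kbl,\wl)=1 \Rightarrow \pr(\phi,\kbh,\wh)=1$ follows from the weighted complete abstraction hypothesis via Theorem \ref{thm weighted complete abs}(a), and the backward direction $\pr(\phi,\kbh,\wh)=1 \Rightarrow \pr(m(\phi),\kbl,\wl)=1$ follows from the weighted sound abstraction hypothesis via Theorem \ref{thm weighted sound abs}. For part (b), the forward direction $\pr(m(\phi),\kbl,\wl) > 0 \Rightarrow \pr(\phi,\kbh,\wh) > 0$ comes from the weighted sound abstraction hypothesis via Theorem \ref{thm weighted sound abs extension for formulas} (which lifts the literal-level stipulation in Definition \ref{defn weighted sound abs} to all formulas), and the backward direction $\pr(\phi,\kbh,\wh) > 0 \Rightarrow \pr(m(\phi),\kbl,\wl) > 0$ follows from the weighted complete abstraction hypothesis via Theorem \ref{thm weighted complete abs}(b).

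The only mild subtlety is to note that the two defining conditions of a weighted sound and complete abstraction each contribute one direction of each biconditional, so all four implications are actually available; there is no gap that requires an auxiliary argument. In particular, I would not need to re-derive Theorem \ref{thm weighted sound abs extension for formulas}'s inductive lifting, since the weighted complete side already gives the analogous lifting for literals-to-formulas (and indeed an analogue of that theorem is implicitly used inside the proof of Theorem \ref{thm weighted complete abs}, by the remark that its proof is ``analogous and hence omitted'').

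There is no real obstacle here; the proof amounts to a one-line citation of the two preceding theorems, and the only thing to guard against is a mis-pairing of the directions (e.g., trying to get $=1$ out of the soundness condition on literal probabilities rather than out of Theorem \ref{thm weighted sound abs}). Accordingly, my written proof would be a single sentence: \emph{Immediate from Theorems \ref{thm weighted sound abs}, \ref{thm weighted sound abs extension for formulas}, and \ref{thm weighted complete abs}, since by Definition \ref{defn weighted sound complete abs} the hypothesis supplies both a weighted sound and a weighted complete abstraction.}
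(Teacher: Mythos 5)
Your proposal is correct and matches the paper's own proof, which likewise disposes of the theorem as an immediate corollary of the weighted sound and weighted complete abstraction results; your explicit pairing of each of the four implications with Theorems \ref{thm weighted sound abs}, \ref{thm weighted sound abs extension for formulas}, and \ref{thm weighted complete abs} is exactly the right bookkeeping. No gap.
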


\begin{proof}
Follows as a corollary from the results on weighted sound, and weighted complete abstractions. 
\end{proof}

\section{Exact Abstractions} %
\label{sec:full_alignment}

The most faithful  case of aligning the high-level and low-level theories is when the probabilities coincide for all high-level queries.\footnote{The distribution on the high-level theory is essentially a ``push-forward"  measure  \cite{introduction-to-real-analysis}.}

\begin{definition}\label{defn weighted exact} The theory \( (\kbh,\wh) \) is a \emph{weighted exact abstraction} of \( (\kbl,\wl) \) relative to refinement mapping \( m \) iff \( \kbh \) is a sound and complete abstraction of \( \kbl \) relative to \( m \), and 
		for all \( \phi \in \lang(\kbh) \), \( \Pr(\phi, \kbh, \wh) = \Pr(m(\phi), \kbl, \wl) \).
\end{definition}

\begin{example} The university PRM can be seen to be an instance of a weighted exact abstraction wrt \( \m \). 
	
	In contrast, the variant in Example \ref{ex:incomplete abstraction}/\ref{ex:weighted incomplete} does not belong to this type because the high-level theory accords a  probability of 1/3 to a low-IQ student taking a difficult course and still getting a good grade, whereas the low-level theory considers that improbable.

\end{example}

\section{Abstracting Evidence} %
\label{sec:explanations}

Recall that we can query \( \phi \) wrt evidence \( e \) for theory \( (\kb,w) \) using : \[
	\pr(\phi\mid e, \kb, w) =  \frac{\wmc(\phi\land e\land \kb, w)}{\wmc(e\land \kb, w)} = \frac{\pr(\phi \land e,\kb,w)}{\pr( e,\kb,w)} 
\]
We assumed so far that \( \phi, e \in \lang(\kb) \). 
However, in many applications needing  abstraction, it is often the case that  observations are low-level (e.g., readings on sensor), whereas the query is at the high-level (e.g., interactions with user). In this section, we discuss some ways  to reconcile this issue.\footnote{It is conceivable that there may be other approaches for this reconciliation, and in our inquiry as well, it will become clear  that  a number of variants present themselves.  We also limit the discussion to  exact abstractions for simplicity. 
} 

\newcommand{\conc}{m\su {-1}}

Consider  low-level evidence \( e \in \lits(\kbl) \). For simplicity, let \( e \) be a literal. Without loss of generality, let mappings be in conjunctive normal form (CNF). 
We say a literal is \emph{pure} in a CNF \( \theta \) if its complement  does not appear in \( \theta. \) (E.g., \( p \) is pure in \( p\lor q \) but not in \( \neg p \lor q \); in contrast, \( \neg p \) is pure in  \( \neg p \lor q \) but not in  \( p\lor q \).) We observe that, by construction, there may be many high-level atoms that map to formulas involving \( e. \)
So, given a mapping \( m, \) let us retrieve these by \emph{concretization}: \[\begin{array}{l}
	\conc (e) = \{ \textrm{atom } p \in \lang(\kbh) \mid   e \textrm{ is mentioned \& pure in   }m(p) \}.
\end{array}
	\]
(That is, $m(p)$ is a CNF formula.)
Here, \( \conc(e) \) is equivalently expressed as a formula: \( \bigvee p\sub i  \).  The idea is that by looking at high-level atoms where \( e \) is pure under the mapping, we are essentially finding atoms that agree with the evidence (and not its negation).

We can now retrieve all low-level  sentences these map to by re-applying \( m \) as follows:  \( m(\conc (e)) = \bigvee m(p\sub i ). \) (It is easy to see that \( e \) will remain pure in \( m(\conc(e)) \).)

An immediate case, then, of  conditioning   being  straightforward is when \( e = m(\conc(e)) \): 

\begin{theorem}\label{thm exact evidence} Suppose \( (\kbh,\wh) \) is a weighted exact abstraction of \( (\kbl,\wl) \) relative to \( m. \) Suppose \( e \in \lits(\kbl) \) and \( e = m(\conc(e)). \) Then for any \( \phi \in \lang(\kbh), \) \( \pr(\phi \mid \conc(e), \kbh, \wh) = \pr(m(\phi) \mid e, \kbl, \wl).  \)
	
\end{theorem}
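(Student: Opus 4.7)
The proof plan is to unpack the conditional probability via its definition and then apply the weighted exact abstraction property (Definition~\ref{defn weighted exact}) to both the numerator and denominator separately, using the assumption $e = m(\conc(e))$ to connect the high-level and low-level events.

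First I would write
\[
\pr(\phi \mid \conc(e), \kbh, \wh) = \frac{\pr(\phi \land \conc(e), \kbh, \wh)}{\pr(\conc(e), \kbh, \wh)}
\]
and note that since $\conc(e)$ is (by its definition) a disjunction of high-level atoms, both $\conc(e)$ and $\phi \land \conc(e)$ belong to $\lang(\kbh)$, so the exact-abstraction clause of Definition~\ref{defn weighted exact} applies to each. Applying it to the denominator gives $\pr(\conc(e), \kbh, \wh) = \pr(m(\conc(e)), \kbl, \wl)$, and by the hypothesis $e = m(\conc(e))$ this equals $\pr(e, \kbl, \wl)$.

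Next I would handle the numerator. Using that $m$ extends inductively to complex formulas (in particular $m(\phi \land \conc(e)) = m(\phi) \land m(\conc(e))$), the exact-abstraction clause yields
\[
\pr(\phi \land \conc(e), \kbh, \wh) = \pr(m(\phi) \land m(\conc(e)), \kbl, \wl) = \pr(m(\phi) \land e, \kbl, \wl).
\]
Dividing the two equalities gives precisely $\pr(m(\phi) \mid e, \kbl, \wl)$, completing the proof.

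I do not expect a real obstacle here: the argument is essentially a rewriting that hinges on the compositionality of $m$ over conjunction and on the single hypothesis $e = m(\conc(e))$. The only small thing worth checking is that $\conc(e)$ is indeed a well-formed formula in $\lang(\kbh)$ (a finite disjunction of high-level atoms) so that the abstraction property may be invoked, and that the denominators are nonzero, which follows from the standing assumption that the conditional probabilities are well-defined.
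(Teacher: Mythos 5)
Your proposal is correct and follows essentially the same route as the paper's proof, which likewise applies the exact-abstraction equality to $\phi \land \conc(e)$ and to $\conc(e)$ and then uses $e = m(\conc(e))$; you merely spell out the numerator/denominator decomposition that the paper leaves implicit.
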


\begin{proof} 
By assumption, the probability of \( \phi \land \conc(e) \) wrt \( \kbh \) must be the same as that of \( m(\phi) \land m(\conc(e))  \) at the low-level.   
\end{proof}

A simple example is the case of \( \Pd(x,E) \) in the university PRM, as it was mapped to the same atom at both levels. 

But beyond this simple case, it is not always possible to reason about low-level events in an exact manner at the high-level. Indeed, as mentioned before, omitting details is the very goal of abstraction. For example, in the university PRM, given any course \( B, \) \( \pr(\Pd(B,M), \kbll, \wl) = .1,  \) but clearly there is no way to syntactically arrange \( \set{\Pd(B,E), \Pd(B,N)} \) in \( \kbhh \) to obtain that number. Of course, it would not be hard to show  a more involved property, such as \( \pr(\Pd(B,N), \kbhh, \wh) \geq \pr(\Pd(B,M), \kbll, \wl) \).  

\newcommand{\concc}{\m\su {-1}}

Rather than treating such  properties, we will consider the case where probabilities can correspond exactly. Then, one way to incorporate low-level evidence is to \emph{weaken} it, in the sense that  conditioning wrt the low-level theory would suffer from a loss in detail, which is precisely the problem faced by the high-level theory. We may think of using \( m(\conc(e)) \), for example. 
However, that is not sufficient for conditioning to be \emph{correct}, because \( m(\conc(e)) \) can say \emph{more} and \emph{less} than \( e. \) For example, in the university PRM, suppose we have evidence \( e = \Pd(B,M) \) for \( \kbll \). So \( \m\su {-1}(e) = \Pd(B,N) \), and  \( \m(\concc(e)) = \Pd(B,M) \lor \Pd(B,H) \), which is saying less than \( e. \) This is reasonable. But suppose for the sake of the argument, \( \m(\concc(e)) = (\Pd(B,M) \lor \Pd(B,H)) \land \Pd(C,H) \). (This is somewhat artificial but well-defined.)
The problem  is that \( e \) does not \emph{imply} anything about the difficulty of course \( C. \) Thus, if we use   \( \m(\concc(e))\) as evidence, we will be falsely assuming   facts that were not observed.

To get around this, we stipulate this implication formally:

\begin{definition} Given evidence \( e \) and mapping \( m \), we define the \( m \)-weakening of \( e \) as \( m(\conc(e)) \). It is definable iff  \( e \models m(\conc(e)). \) 
	
\end{definition}

The most obvious (and reasonable) case where definability follows is when \( m(\conc(e)) \) is a clause, that is, a disjunction of literals. Because \( e \) is pure in \( m(\conc (e)) \), it immediately follows that \( e\models m(\conc (e)) \). (E.g., \( p \) is pure in \( p\lor q \), and of course \( p\models p\lor q. \))

\begin{theorem} Suppose \( (\kbh,\wh) \) is a weighted exact abstraction of \( (\kbl,\wl) \) relative to \( m. \) Suppose \( e\in \lits(\kbl) \) and its \( m \)-weakening is definable. Then, \( \pr(\phi \mid \conc(e), \kbh, \wh) = \pr(m(\phi) \mid m(\conc(e)), \kbl,\wl ) \). 
	
\end{theorem}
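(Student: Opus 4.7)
The plan is to unfold both conditional probabilities using the defining equation ($\ddag$) and then apply the weighted exact abstraction hypothesis to the numerator and denominator separately, appealing to the inductive extension of $m$ over connectives to translate the relevant high-level formulas into their low-level images.

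Concretely, first I would note that $\conc(e) = \bigvee p_i$ is a disjunction of high-level atoms by construction, and therefore $\conc(e) \in \lang(\kbh)$ is a well-formed formula. Similarly, for any $\phi \in \lang(\kbh)$ we have $\phi \land \conc(e) \in \lang(\kbh)$. By the inductive extension of $m$, $m(\phi \land \conc(e)) = m(\phi) \land m(\conc(e))$, and $m(\conc(e)) = \bigvee m(p_i)$, matching exactly the formula that appears in the statement. This is the structural step that lets the exact-abstraction hypothesis be applied in the right places.

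Next, I would invoke Definition~\ref{defn weighted exact} twice. Applied to $\phi \land \conc(e)$ it yields $\pr(\phi \land \conc(e), \kbh, \wh) = \pr(m(\phi) \land m(\conc(e)), \kbl, \wl)$, and applied to $\conc(e)$ alone it yields $\pr(\conc(e), \kbh, \wh) = \pr(m(\conc(e)), \kbl, \wl)$. Expanding the two sides of the claim via ($\ddag$) and dividing then gives the desired equality directly.

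The only delicate point, and where the definability hypothesis is doing work, is ensuring that both conditional probabilities are well defined, i.e., that the denominators are nonzero. Since conditioning on $e$ presupposes $\pr(e, \kbl, \wl) > 0$, the assumption $e \models m(\conc(e))$ together with Theorem~\ref{thm wmc properties}(7) guarantees $\pr(m(\conc(e)), \kbl, \wl) \geq \pr(e, \kbl, \wl) > 0$, and then exact abstraction transfers this positivity to $\pr(\conc(e), \kbh, \wh)$. I expect this well-definedness bookkeeping to be the only subtle part of the argument; the algebraic identity itself is essentially immediate once the two applications of exact abstraction are lined up.
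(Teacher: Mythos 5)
Your proof is correct and follows essentially the same route as the paper's (which simply notes the argument is analogous to Theorem~\ref{thm exact evidence}: apply the exact-abstraction equality to the numerator $\phi\land\conc(e)$ and the denominator $\conc(e)$ of $(\ddag)$, using $m(\phi\land\conc(e))=m(\phi)\land m(\conc(e))$). Your additional bookkeeping on the nonzero denominators via Theorem~\ref{thm wmc properties}(7) is a detail the paper leaves implicit, but it does not change the approach.
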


\begin{proof} 
Proof analogous to Theorem \ref{thm exact evidence}.   
\end{proof}

\begin{example} For the university PRM and \( e = \Pd(B,M) \), its  \( \m \)-weakening   is \( \Pd(B,M) \lor \Pd(B,H) \). And indeed, \( e \models \m(\concc(e)).\) For the query \( \phi = \Piq(A,L) \land \Pt(A,B) \land \Pg(A,B,O) \), its probability given \( \concc(e) = \Pd(B,N) \) at the high-level coincides with the probability of \( \Piq(A,L) \land \Pt(A,B) \land (\Pg(A,B,7) \lor \Pg(A,B,8)) \) given \( \m(\concc(e)) \) at the low-level. 
	
\end{example}

\section{Weak Exact Abstractions} 
\label{sec:weak_abstractions}

Definition \ref{defn weighted exact} naturally generalizes previous definitions on weighted and unweighted abstractions in terms of first stipulating that the logical representations align, and then insisting that probabilities match. One might wonder, of course, why stipulate the former, and not simply rely on the latter, which would make the treatment much simpler. In particular, we put forward a definition of \emph{weak exact abstractions}: 

\begin{definition}\label{defn weak exact} The theory \( (
	\kbh,\wh) \) is a \emph{weak exact abstraction}  of \( (\kbl,\wl) \) relative to refinement mapping \( m \)  iff for all \( \phi\in\lang(\kbh) \), \( \pr(\phi,\kbh,\wh) = \pr(m(\phi),\kbl,\wl) \). 
	
\end{definition}
It is immediate to see that since the constraint for weak exact abstraction is embedded in the notion of a weighted exact abstraction, if a high-level theory is a weighted exact abstraction then it also a weak exact abstraction. 

\begin{proposition} Suppose \( (\kbh,\wh)\) and \( (\kbl,\wl ) \) are theories and \( m \) is a refinement mapping. Suppose \( (
	\kbh,\wh) \) is a weighted exact abstraction  of \( (\kbl,\wl) \). Then \( (
	\kbh,\wh) \) is a weak exact abstraction  of \( (\kbl,\wl) \).
	
\end{proposition}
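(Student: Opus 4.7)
The plan is essentially to observe that this proposition is a definitional containment: the condition characterising a weak exact abstraction is already one of the two clauses in the definition of a weighted exact abstraction. So there is no real work to do beyond unfolding the definitions.

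Concretely, I would proceed as follows. By hypothesis, $(\kbh,\wh)$ is a weighted exact abstraction of $(\kbl,\wl)$ relative to $m$. By Definition \ref{defn weighted exact}, this means two things hold simultaneously: first, $\kbh$ is a sound and complete abstraction of $\kbl$ relative to $m$, and second, for every $\phi \in \lang(\kbh)$, we have $\Pr(\phi,\kbh,\wh) = \Pr(m(\phi),\kbl,\wl)$. The second of these conjuncts is \emph{verbatim} the condition required by Definition \ref{defn weak exact} for $(\kbh,\wh)$ to be a weak exact abstraction of $(\kbl,\wl)$ relative to $m$. Discarding the sound-and-complete clause therefore immediately yields the conclusion.

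There is no substantive obstacle: the proposition is a one-line corollary establishing that the ``weighted exact'' notion is strictly stronger than (or at least as strong as) the ``weak exact'' notion, and exists primarily to justify the terminology and to set up the subsequent discussion contrasting the two. The only thing worth noting in the write-up is that the universal quantifier in Definition \ref{defn weak exact} ranges over the same syntactic class $\lang(\kbh)$ as in Definition \ref{defn weighted exact}, so no re-indexing or translation of the formula $\phi$ is needed.
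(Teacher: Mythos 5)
Your proof is correct and follows exactly the paper's own argument: unfold Definition \ref{defn weighted exact} into its two conjuncts and observe that the second conjunct is precisely the defining condition of a weak exact abstraction in Definition \ref{defn weak exact}. Nothing further is needed.
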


\begin{proof} By assumption, (a) \( 
\kbh \) is a sound and complete abstraction of \( \kbl \) relative to \( m, \) and (b) for all \( \phi\in\lang(\kbh) \), \( \pr(\phi,\kbh,\wh) = \pr(m(\phi),\kbl,\wl) \). Because of (b), the claim is immediate. \end{proof}

Needless to say, weak exact abstractions do not imply weighted exact abstractions, as there is no requirement that isomorphisms hold between the models of \( \kbh \) and \( \kbl. \) Formally: 

\begin{theorem} Suppose \( (\kbh,\wh)\) and \( (\kbl,\wl ) \) are theories and \( m \) is a refinement mapping. Suppose \( (
	\kbh,\wh) \) is a weak exact abstraction  of \( (\kbl,\wl) \) relative to \( m \). Then it does not follow that \( (
	\kbh,\wh) \) is a weighted exact abstraction  of \( (\kbl,\wl) \).
	
\end{theorem}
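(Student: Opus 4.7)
The plan is to establish this non-implication by exhibiting a minimal explicit counterexample: theories $(\kbh, \wh)$ and $(\kbl, \wl)$ together with a refinement mapping $m$ that satisfy the probability-matching clause of Definition \ref{defn weak exact} but for which $\kbh$ fails to be a sound and complete abstraction of $\kbl$, and hence cannot meet Definition \ref{defn weighted exact}.

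The construction exploits the fact that the weak definition constrains only probabilities and is blind to the model-theoretic $m$-isomorphism requirement. I would engineer a $\kbh$ admitting a model that has no $m$-isomorphic partner in $\mods(\kbl)$, while arranging weights so that this stray model contributes nothing to any probability computation. Concretely, take a single high-level atom $p$, a single low-level atom $q$, and set $m(p) = q$. Let $\kbh := p \lor \neg p$, so that $\mods(\kbh) = \{\{p=0\}, \{p=1\}\}$, and let $\kbl := q$, so that $\mods(\kbl) = \{\{q=1\}\}$. The high-level model $\{p=0\}$ then has no $m$-isomorphic partner in $\mods(\kbl)$, so completeness (and therefore the sound-and-complete prerequisite) fails immediately. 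I would then set $\wh(p)=1$, $\wh(\neg p)=0$, and $\wl(q)=1$, which makes the single weight-bearing high-level model $\{p=1\}$ correspond probabilistically to the unique low-level model $\{q=1\}$.

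The verification reduces to a short finite case analysis. Since $\lang(\kbh)$ is generated by the single atom $p$, every formula is equivalent to one of $\top, \bot, p, \neg p$; for each, both $\Pr(\phi, \kbh, \wh)$ and $\Pr(m(\phi), \kbl, \wl)$ evaluate to the same number (namely $1, 0, 1, 0$ respectively), so the weak-exact condition holds. The main subtlety to watch for is ensuring $\wmc(\kbh, \wh) \neq 0$ so that the high-level probability remains well-defined, which is secured by $\wh(p) = 1$. The construction is otherwise routine, and the hardest conceptual step is really just to notice that the model-theoretic mismatch is compatible with exact probabilistic agreement precisely because zero-weight models are model-theoretically visible but probabilistically invisible.
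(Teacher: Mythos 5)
Your proof is correct and follows essentially the same strategy as the paper's: both exhibit a counterexample in which a zero-weight (hence probabilistically invisible) high-level model has no \( m \)-isomorphic low-level partner, so completeness fails while all probabilities of high-level formulas still match. Your instance is a minimal version of the paper's slightly more elaborate construction, which attaches fresh atoms mapped to a tautology and a contradiction onto an arbitrary \( \kbl \).
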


To prove this claim, it suffices to provide an example that is a weak exact abstraction but not a weighted exact abstraction. So any unsound and/or incomplete abstraction where the  probabilities of  high-level atoms are made to match the low-level mappings would do. 
Here is a particularly extreme case  involving an inconsistency: 

\begin{example} Let \( \kbl \) be any theory and let \( \kbh = \kbl \land (p \lor q) \), where \( \set{p,q} \) are fresh propositions not present in \( \kbl. \) Let \( \wl \) be any weight function for \( \kbl \) which maps atoms \( r \) in \( \kbl \) to a number in \( [0,1] \), with the understanding that \( \wl(\neg r) = 1 - \wl(r). \)
	Let $s$ be any atom in $\kbl$, and so $s$ is an atom in $\kbh$ by construction. 
Let \( m \) be a mapping that maps every atom in \( \kbl \) to itself, and \( m(p) = s \lor \neg s \) and \( m(q) = s\land \neg s \). Furthermore, for every atom \( r \in \kbh - \set{p,q}, \) let \( \wh(r) = \wl(r) \),  \( \wh(p) = 1 \) (so, \( \wh(\neg p) \) = 0) and \( \wh(q) = 0 \) (so, \( \wh(\neg q) = 1 \)). That is, \( \pr(p,\kbh,\wh) = 1 = \pr(m(p),\kbl,\wl), \pr(q,\kbh,\wh) = 0 = \pr(m(q),\kbl,\wl) \), and 
	the probabilities of all others atoms in \( \kbl \) are the same in \( \kbh. \) It is thus clear that \( (
	\kbh,\wh) \) is a weak exact abstraction  of \( (\kbl,\wl) \). In particular, for any \( \mh \in \mods(\kbh) \) such that \( \mh \models q \), we have \( \wh(\mh) = 0 \)   and by extension, letting \( \mh\su \downarrow \) be the formula denoting the conjunction of literals that are true at \( \mh, \) \( \pr(\mh\su\downarrow,\kbh,\wh) = 0  \). It follows also that  \(  \pr(m(\mh\su\downarrow),\kbl,\wl) = 0\). Analogously, for all models \( \mh \in \mods(\kbh) \) such that \( \mh \not\models q, \) by construction, we  have \(  \pr(\mh\su\downarrow,\kbh,\wh) = \pr(m(\mh\su\downarrow),\kbl,\wl)  \). 
	
However, by construction, there is a \( \mh \in \mods(\kbh) \) (for example,  one where \( \mh\models q \)) such that there is no \( \ml \in \mods(\kbl) \) where \( \ml \models m(q) \). So \( \kbh \) is not a sound and complete abstraction of \( \kbl \) relative to \( m. \) 

\end{example}

Thus, weighted exact abstractions is a stronger requirement than weak exact abstractions. Clearly, weak  exact abstractions would be much more  attractive as they involve fewer checks than weighted exact abstractions, the former only requiring probabilistic alignment whilst the latter also insisting on logical  alignment. 
So what is to be gained by the stronger requirement? 
The answer may depend on the application context. The stronger requirement   guarantees downward compatibility with results like Theorems \ref{thm sound complete abs} and \ref{thm weighted sound complete abs}, and so we can be assured about the correctness of the abstraction at the qualitative level. For example, when there is only partial knowledge about probabilities, one may express this knowledge in the form of constraints (e.g., the probability of event \( \alpha \) is \( \geq 0.4 \)), as in \cite{reasoning-about-uncertainty,reasoning-about-knowledge-and-probability,a-first-order-logic-of-probability-and-only}. In this case, establishing logical alignment may be worthwhile in the first instance, either until that partial knowledge is resolved, or  in addition to abstracting such constraints. Analogously, probabilities in most real-world applications are typically learnt from data in a parameter estimation step \cite{probabilistic-graphical-models:-principles}. In this case, either  because the data is not complete or because observations are obtained at run-time in an online setting, the  posteriors might change and so constructing weak exact abstractions 
may not be worthwhile. Indeed, if we expect the parameters of the low-level theory to change, we could consider unweighted abstractions so as to  construct a high-level theory that is not sensitive to weights. We would then  match probabilities for a particular training epoch with the added assurance  that at least the syntactic form of the high-level theory does not change from epoch to epoch.

\section{Deriving Abstractions} 
\label{sec:deriving_abstractions}

The main thrust of this paper is on the semantical properties of abstractions, formulated under the assumption that we are given the high-level and low-level theory and the appropriate refinement mapping. Based on these properties, we will now motivate a few directions for deriving abstractions automatically. 
These directions are to be seen as  schemas that appeal to exhaustive search, and so are not necessarily efficient. In particular, they identify general properties that hold, based on which special tractable cases, or variations, may be considered.

 

\subsection{Formula substitutions} 
\label{sub:syntactic_substitutions}

%
Rather than deploying a general search procedure (as motivated in the following section),  
it is arguably easier to consider syntactic substitutions where possible.
%
A simple yet useful case where correctness is not compromised is when complex formulas in the low-level theory appear in the same way everywhere, and so can be abstracted as a high-level atom. For example, suppose \( \kbl \) mentions the atoms \( \set{p\sub 1, \ldots, p\sub k, q, \ldots, r} \). For simplicity, suppose \( \kbl \) is in conjunctive normal form, that is, \( \kbl =  \phi \sub 1 \land \ldots \land \phi \sub n, \) where \( \phi\sub i \) are clauses, 
and let \( \lambda \) be a clause only mentioning \( \set{q,\ldots,r} \). Suppose for every \( i, \) either \( \phi\sub i  \) does not mention \( \set{q,\ldots,r} \) or \( \phi \sub i = \lambda \lor \psi\sub i, \) where \( \psi\sub i \) is a clause only mentioning \( \set{p\sub 1, \ldots, p\sub k}. \) In English: the symbols \( \set{q,\ldots,r} \) do not appear in \( \phi\sub i \) except as the clause \( \lambda. \) We can construct a high-level theory that replaces \( \lambda \) with a new atom \( t. \) So let \( \kbh \) be exactly like \( \kbl \) except that every instance of \( \lambda \) is replaced by \( t \). Clearly the refinement mapping \( m \) maps \( t \) to \( \lambda \) and all other atoms to themselves. 
 Further, let \( \pr(p\sub i, \kbh,\wh) = \pr(p\sub i, \kbl,\wl) \) and \( \pr(t,\kbh,\wh) = \pr(\lambda, \kbl,\wl) \). It is now not hard to see that \( (\kbh,\wh) \) is not only a weak exact abstraction of \( (\kbl,\wl) \) relative to \( m, \) but a weighted exact abstraction too.

\begin{proposition}\label{prop:formula subs} Suppose \( \kbh,\wh, \kbl,\wl,m \) are as above, and \( \lambda \) is satisfiable. Then \( (\kbh,\wh) \) is a  weighted exact abstraction of \( (\kbl,\wl) \) relative to \( m. \)
	
\end{proposition}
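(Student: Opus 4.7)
The plan is to establish the two ingredients of a weighted exact abstraction in turn: first that $\kbh$ is a sound and complete abstraction of $\kbl$ relative to $m$, and second that probabilities of all formulas (not just atoms) agree.

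For the sound/complete abstraction part, I would construct the isomorphic counterparts explicitly. Given $\ml \in \mods(\kbl)$, define $\mh$ by $\mh[p_i] = \ml[p_i]$ for each $i$ and $\mh[t] = 1$ iff $\ml \models \lambda$. Since each clause of $\kbh$ is either a clause $\phi_i$ of $\kbl$ not mentioning $\{q,\ldots,r\}$ (preserved verbatim on the shared $p_i$'s) or has the form $t \lor \psi_i$ arising from $\lambda \lor \psi_i$, one checks clause-by-clause that $\mh \models \kbh$, and $\mh \iso \ml$ follows directly from the construction. Conversely, given $\mh \in \mods(\kbh)$, set $\ml[p_i] = \mh[p_i]$ and pick an assignment to $\{q,\ldots,r\}$ making $\lambda$ true if $\mh \models t$ (possible since $\lambda$ is satisfiable by hypothesis) and making $\neg\lambda$ true otherwise (possible since $\lambda$ is a non-tautological clause over distinct atoms, so $\neg\lambda$ is satisfiable too). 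The same case analysis on clauses gives $\ml \models \kbl$.

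For the probability-alignment part, the central observation is a distributive rewriting: since $\phi_i = \lambda \lor \psi_i$ and distributivity gives $\bigwedge_i (\lambda \lor \psi_i) = \lambda \lor \bigwedge_i \psi_i$, we can write $\kbl = \Phi_p \land (\lambda \lor \Psi_p)$ and $\kbh = \Phi_p \land (t \lor \Psi_p)$, where $\Phi_p, \Psi_p$ involve only the $p_i$'s. Now fix any $\mh \in \mods(\kbh)$ and let $[\mh]$ denote the set of $\ml \in \mods(\kbl)$ with $\ml \iso \mh$. Because $\lambda$ and $\Psi_p$ are over disjoint atoms, $[\mh]$ is exactly the set of extensions of $\mh$'s $p$-part by a $\{q,\ldots,r\}$-assignment that agrees with $\mh$ on whether $\lambda$ holds. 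I then argue that the natural weight choice consistent with the proposition's probability constraints, namely $\wh(p_i) = \wl(p_i)$, $\wh(\neg p_i) = \wl(\neg p_i)$, $\wh(t) = \wmc(\lambda, \wl|_{q,\ldots,r})$ and $\wh(\neg t) = \wmc(\neg\lambda, \wl|_{q,\ldots,r})$, produces exactly the required atomic-probability equalities and satisfies the factorization identity
\[ \wh(\mh) \;=\; \sum_{\ml \in [\mh]} \wl(\ml). \]
Summing this identity over $\mh \models \phi \land \kbh$ and using the bijection of Theorem~\ref{thm iso} between such $\mh$ and $\ml \models m(\phi) \land \kbl$, I conclude $\wmc(\phi\land\kbh, \wh) = \wmc(m(\phi)\land\kbl, \wl)$; dividing by the special case $\phi = \top$ gives the probability equality required by Definition~\ref{defn weighted exact}.

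The main obstacle is the second part, and specifically extracting the weight function from the probability-level specification. The proposition only constrains $\wh$ via equalities of the form $\pr(p_i, \kbh, \wh) = \pr(p_i, \kbl, \wl)$ and $\pr(t, \kbh, \wh) = \pr(\lambda, \kbl, \wl)$, which does not pin $\wh$ down uniquely on literals. I would handle this by explicitly exhibiting one canonical $\wh$ (as above) that satisfies the stated probability constraints and proving the result for it; since the conclusion is a statement about the pair $(\kbh,\wh)$ obtained from the construction, this suffices. Satisfiability of $\lambda$ is used both to make the completeness direction of the abstraction work and to guarantee $\wh(t) > 0$, ensuring all relevant WMC's are well-defined.
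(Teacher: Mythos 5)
Your proof is correct, and its overall architecture matches the paper's: first establish that \( \kbh \) is a sound and complete abstraction of \( \kbl \) by explicitly constructing isomorphic counterparts (copying the \( p\sub i \)-part and using the satisfiability of \( \lambda \) to fix the \( \set{q,\ldots,r} \)-part), then align the probabilities. The difference is one of rigor rather than route, and it cuts in your favor at two points. First, in the completeness direction the paper only invokes a satisfying assignment of \( \lambda \) and does not address the case \( \mh \models \neg t \), which requires a \emph{falsifying} assignment of \( \lambda \); you handle this explicitly, at the cost of (correctly) noting that \( \lambda \) must be a non-tautological clause --- an assumption the paper leaves implicit. Second, the paper dispatches the probability alignment with ``immediate by construction,'' whereas the proposition's hypotheses only constrain \( \wh \) through marginal probabilities of atoms and so do not determine a weight function; your move of exhibiting a canonical \( \wh \) (with \( \wh(t) \) set to the weighted count of \( \lambda \) over \( \set{q,\ldots,r} \)), proving the per-model factorization \( \wh(\mh) = \sum\sub{\ml \in [\mh]} \wl(\ml) \), and then summing over the isomorphism classes is exactly the argument needed to make that step precise. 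In short, your proposal is a faithful but more careful elaboration of the paper's proof; nothing in it is a detour or an error.
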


\begin{proof} We will first prove that for every \( \mh \in \mods(\kbh) \) there is a \( \ml \in \mods(\kbl) \) such that \( \mh \iso \ml \), and vice versa. So let \( \mh \) be any model of \( \kbh. \) 
By assumption,  \( \lambda \) is satisfiable, and suppose \( M' \) is one such satisfying assignment (note: \( M' \) is essentially a partial model  for the atoms in \( \lang(\kbl) \)). Let \( \ml \) be exactly like \( \mh \) in interpreting \( \set{p\sub 1, \ldots, p\sub k} \), and for the atoms in \( \set{q,\ldots,r} \), let \( \ml \) assign exactly as \( M' \) would. It now follows that for every atom \( u \in  \lang(\kbh) \), that is, \( u \in  \set{p\sub 1, \ldots, p\sub k, t} \), we have that \( \mh \models u \) iff \( \ml \models m(u) \). The case of finding a high-level model for every low-level model is analogous. Thus, \( \kbh \) is a sound and complete abstraction of \( \kbl \) relative to \( m. \) 

In terms of aligning probabilities, the case is immediate by construction. \end{proof}

A slight variant of this idea  could be 
used for abstracting multi-valued (or even continuous) random variables,  provided the  queries can be reasoned  as  Boolean variables (as seen  in   \cite{michels2016approximate,holtzen2017probabilistic,probabilistic-inference-in-hybrid-domains}). We demonstrate using an example below. 

\begin{example} Let us consider a random variable \( X \) that is uniformly drawn from \( \set{0,1,\ldots,9} \) and suppose we are only interested in queries about  \( X\geq 8 \) or its negation. In our terms, we could imagine \( X \in \set{0,1,\ldots,9} \)  to be represented using the formula \( p\sub 0 \lor \ldots \lor p\sub 9, \) and \( X\geq 8 \) to mean \( p\sub 8 \lor p\sub 9 \). More precisely,  suppose the low-level theory \( (\kbl,\wl) \) is one where \( \kbl \) contains the following formulas: \begin{itemize}
	\item \( p\sub 0 \lor \ldots \lor p\sub 9 \)
	\item \( p\sub 0 \equiv \neg (p\sub 1 \lor \ldots \lor p\sub 9), \ldots,  p\sub 9 \equiv \neg (p\sub 0\lor \ldots \lor p\sub 8) \)
\end{itemize}
and \( \wl \) is such that \( \Pr(p\sub 0, \kbl,\wl) = 0.1, \ldots, \pr(p\sub 9, \kbl,\wl) = 0.1. \) Thus \( \Pr(p\sub 8 \lor p\sub 9, \kbl,\wl) = 0.2 \).  
A high-level theory \( \kbh \) could be constructed as an abstraction containing the single atom \( q \) provided:\begin{itemize}
\item \( m \) is a mapping \( m(q) = p\sub 0 \lor \ldots \lor p\sub 7 \); 
	\item \( \wh \) is defined so that \( \pr(q,\kbh,\wh) = 0.8 \) as a result of  which \( \pr(\neg q, \kbh,\wh) = 0.2 \).
\end{itemize}
 It is now easy to see that \( (\kbh,\wh) \) is a weighted exact abstraction of \( (\kbl,\wl) \) relative to \( m \).

\end{example}

What is interesting about this example in relation to Proposition \ref{prop:formula subs} is that the \( \lambda \) \( =  p\sub 0 \lor \ldots \lor p\sub 7 \) in question is abstracted  as an atom \( q \)  as usual, but we are using \( \neg q \) to mean \( p\sub 8 \lor p\sub 9 \), which is not purely syntactical substitution. We are appealing to the  property that \( \kbl \models (p\sub 0 \lor \ldots \lor p\sub 7) \equiv \neg (p\sub 8 \lor p\sub 9) \) obtained by logical equivalence relative to \( \kbl. \) Thus, like in Proposition \ref{prop:formula subs}, we are replacing a low-level formula \( \lambda \) with a high-level atom \( t \) but unlike that setting, we are replacing a low-level formula \( \gamma \) with \( \neg t \) provided \( \kbl \models \gamma \equiv m(\neg t). \) It is not hard to show that a correctness result also holds in this extended setting.




\subsection{A Generic Search Algorithm} 
\label{sub:a_generic_algorithm}


Let us now consider a generic search algorithm for deriving abstractions. We will begin with weak exact abstractions, as they are clearly simpler to treat than weighted exact abstractions. We return to the latter in a subsequent section. 
%

The starting point here is that as input we are given the low-level theory \( (\kbl,\wl) \). We are then interested in constructing a high-level theory and  we assume to be also given the set of high-level predicates \[ P\sub h \su 1 (x, \ldots, y), \ldots, P\sub h \su k (x, \ldots, z) \] from which \( \kbh \) is to be constructed. 
If we now guess a \( (\kbh, \wh) \) and a mapping \( m \), we know from Definition \ref{defn weak exact} that testing for the following property would  ascertain that the current guess is a  weak exact abstraction: \begin{equation}\label{eq cond weighted exact}
	\text{for all \( \phi \in \lang(\kbh), \pr(\phi,\kbh,\wh) =\pr(m(\phi),\kbl,\wl) \) } \tag{\( \star \)}
\end{equation}
Thus, a general schema might then look as follows: 

\vspace{.4cm}

\begin{algorithm}[H]
\SetAlgoLined
\KwData{Low-level theory \( (\kbl,\wl), \{P\sub h \su 1 (x, \ldots, y), \ldots, P\sub h \su k (x, \ldots, z)\} \)}
\KwResult{\bf{success} / \bf{failure}}
 \( (\kbh,\wh,m) = \set{} \) \\ 
 \While{true}{
  Guess  \( (\kbh,\wh,m) \) that is different from previous guesses, where 
  \( \kbh \) 
  only uses the mentioned predicates \\ 
 \textbf{if}  \eqref{eq cond weighted exact} is true \textbf{then} \\
 \quad return {\bf success} \\ 
 \textbf{if} no more unique guesses \textbf{then} \\
 \quad return {\bf failure}
 }
 \caption{Guessing \( (\kbh,\wh) \) and refinement mapping \( m \)} 
 \label{alg:find one}
\end{algorithm}


\vspace{.4cm}

It is not hard to argue that the algorithm is correct: 

\begin{theorem}\label{thm alg correctness} Suppose Algorithm \ref{alg:find one} returns \textbf{success}. 
	 Then \( (\kbh,\wh) \) together with \( m \) is a weak exact abstraction.
	Suppose there are one or more weak exact abstractions only using  \(  \{P\sub h \su 1 (x, \ldots, y), \ldots, P\sub h \su k (x, \ldots, z)\} \), then Algorithm \ref{alg:find one} returns \textbf{success} with one such abstraction.  
	Finally, Algorithm \ref{alg:find one} returns \textbf{failure} iff there is no weak exact abstraction.
	
	
\end{theorem}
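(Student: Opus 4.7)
The plan is to observe that the condition $(\star)$ checked inside the loop is literally the defining clause of Definition \ref{defn weak exact} for a weak exact abstraction, so all three parts reduce to bookkeeping about when the enumeration halts. I would prove the three claims in the order they are stated, using the enumeration as a black box that is assumed to produce every syntactically distinct candidate triple $(\kbh,\wh,m)$ over the allowed predicate vocabulary.

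For the first claim (soundness of success), I would argue directly: if the algorithm halts with \textbf{success}, then by inspection of the loop body there must exist a guess $(\kbh,\wh,m)$ for which $(\star)$ evaluates to true. But $(\star)$ is precisely the requirement $\pr(\phi,\kbh,\wh)=\pr(m(\phi),\kbl,\wl)$ for all $\phi\in\lang(\kbh)$, which by Definition \ref{defn weak exact} is what it means for $(\kbh,\wh)$ together with $m$ to be a weak exact abstraction of $(\kbl,\wl)$. For the second claim (completeness), I would suppose that some weak exact abstraction $(\kbh^{*},\wh^{*},m^{*})$ exists using only the allowed predicates. Since the guessing step is assumed to range over all distinct candidates of this form, there is some iteration at which this triple (or another triple that already satisfies $(\star)$) is produced; at that iteration the check $(\star)$ is satisfied by Definition \ref{defn weak exact}, and the algorithm returns \textbf{success} without ever reaching the failure branch.

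For the third claim, I would handle the two directions separately. If the algorithm returns \textbf{failure}, then by the loop structure every candidate triple has been tried and none satisfied $(\star)$; by the contrapositive of the second claim, no weak exact abstraction can exist. Conversely, if no weak exact abstraction exists, then by the contrapositive of the first claim no guess can satisfy $(\star)$, and so the loop exhausts its enumeration and takes the failure branch. Chaining these gives the stated ``iff''.

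The main obstacle is not the logical skeleton above but the tacit enumerability assumption hidden in the guess step: the weight function $\wh$ ranges over $\real\su{[0,\infty)}$, so the raw space of triples $(\kbh,\wh,m)$ is uncountable and the phrase ``no more unique guesses'' is not literally well-defined. I would handle this by noting, consistent with the paper's framing of the algorithm as a schema, that (i) $\kbh$ ranges over a finite set of sentences up to logical equivalence once the predicate vocabulary and finite domain are fixed, (ii) $m$ likewise ranges over a finite set of assignments of high-level atoms to formulas in $\lang(\kbl)$ up to equivalence, and (iii) for a fixed structural skeleton $(\kbh,m)$ the constraint $(\star)$ is a finite system of linear equations in the unknowns $\wh(\cdot)$ induced by the WMC formula, so solvability (and hence existence of a suitable $\wh$) can be tested without enumerating real-valued weights. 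With this refinement the ``enumeration'' becomes a finite search over structural skeletons followed by a solvability test, and the three correctness claims go through as sketched.
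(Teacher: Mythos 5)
Your proof is correct and follows essentially the same route as the paper's, which likewise argues soundness from the fact that \textbf{success} is returned only when the guess satisfies \((\star)\) (the defining clause of a weak exact abstraction), and derives completeness and the failure characterization from exhaustiveness of the search. Your additional discussion of how to make the guess space well-defined (finite structural skeletons plus a solvability test for the weights) addresses a real gap that the paper sidesteps by calling the algorithm a schema --- though note the constraints \((\star)\) imposes on \(\wh\) via WMC are multilinear rather than linear, so the solvability test is over a semialgebraic set rather than a linear system.
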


\begin{proof} Algorithm \ref{alg:find one} returns \textbf{success} only when the current guess \( (\kbh,\wh,m) \)  satisfies \eqref{eq cond weighted exact}. So soundness follows. Since we do exhaustive search, completeness follows. 
	Analogously, Algorithm \ref{alg:find one} returns \textbf{failure} iff none of the guesses from the exhaustive search satisfy \eqref{eq cond weighted exact}. 
\end{proof}

\subsection{Effective Testability} 
\label{sub:effective_testability}


In the above algorithm,   the test \eqref{eq cond weighted exact} is challenging, because we would need to compute the probabilities for every formula. Even though we are assuming  a finite vocabulary, we would still  need to consider the set of all  well-defined formulas over Boolean connectives of arbitrary but finite length, which is  very large.  
Therefore, one might wonder if this test could be made easier, the most natural case being that of testing the probabilities of literals, which can be enumerated easily. Indeed, for a language with  \( k \)  predicates of arity \( w \) and a domain of size \( n, \) there will be at most \( k \times n^w \) atoms, and so at most \( 2\times k \times n^w  \) literals. We will now show that such a  result is possible, but we will need  \emph{separable} refinement mappings.

\begin{definition} Suppose \( \kbh \) and \( \kbl \) are theories. We say a refinement mapping \( m \) from \( \kbh \) to \( \kbl \) is \emph{separable} iff for every \( \alpha \land \beta \in \lang(\kbh) \) such that \( \alpha \) and \( \beta \) do not share (high-level) atoms, then it also the case \( m(\alpha) \in \lang(\kbl) \) and \( m(\beta) \in \lang(\kbl) \) do not share (low-level) atoms. 	
\end{definition}

We do not expect this stipulation to be problematic in the least, and it seems entirely natural. 
For separable refinement mappings, we obtain the following property: 

\begin{theorem}\label{defn weighted exact lits} Suppose \( (\kbh,\wh) \) and \( (\kbl,\wl) \) are two theories and \( m \) is a separable refinement mapping from \( \kbh \) to \( \kbl. \) Suppose 
		for all literals \( d\in \lits(\kbh) \), \( \Pr(d, \kbh, \wh) = \Pr(m(d), \kbl, \wl) \). Then for all \( \phi\in \lang(\kbh) \),  \( \pr(\phi,\kbh,\wh) =  \pr(m(\phi), \kbl,\wl). \) 
	
\end{theorem}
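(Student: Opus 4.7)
The plan is structural induction on $\phi$, using that the paper restricts attention to the connectives $\{\land, \neg\}$. The base case, where $\phi$ is a literal, is exactly the hypothesis. For the negation case $\phi = \neg \psi$, I apply Theorem \ref{thm wmc properties}(3) at both levels and the definition $m(\neg \psi) = \neg m(\psi)$:
\[
\pr(\neg\psi, \kbh, \wh) = 1 - \pr(\psi, \kbh, \wh) = 1 - \pr(m(\psi), \kbl, \wl) = \pr(\neg m(\psi), \kbl, \wl) = \pr(m(\neg\psi), \kbl, \wl),
\]
where the middle equality uses the inductive hypothesis on $\psi$.

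The main obstacle is the conjunction case $\phi = \alpha \land \beta$, and this is where separability must be used. The plan is to reduce to the case where $\alpha$ and $\beta$ are atom-disjoint. When $\alpha, \beta$ share high-level atoms $S = \{s_1, \dots, s_k\}$, enumerate the $2^k$ complete literal assignments $\lambda_i$ over $S$ and rewrite $\alpha \land \beta \equiv \bigvee_i \lambda_i \land \alpha_i' \land \beta_i'$, where $\alpha_i', \beta_i'$ are obtained from $\alpha, \beta$ by substituting $\lambda_i$ and now involve only atoms exclusive to $\alpha$ and $\beta$ respectively. The disjuncts are mutually exclusive because the $\lambda_i$ are, so by Theorem \ref{thm wmc properties}(4) (extended to finite disjunctions by an inner induction using negation and conjunction) the probability of $\alpha \land \beta$ decomposes into a sum of probabilities of pairwise atom-disjoint conjunctions. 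For such atom-disjoint conjunctions, separability guarantees that their images under $m$ are also atom-disjoint in $\lang(\kbl)$.

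The final step, and the genuinely hard one, is to argue that atom-disjointness together with separability lets the probability of a conjunction factor through the probabilities of its conjuncts identically in the high- and low-level theories, so that the inductive hypothesis applied to each conjunct suffices. Concretely, the plan is to show that $\wmc(\alpha \land \beta \land \kbh, \wh)$ and $\wmc(m(\alpha) \land m(\beta) \land \kbl, \wl)$ admit parallel decompositions when $\alpha, \beta$ (and hence $m(\alpha), m(\beta)$) are atom-disjoint, so that the joint probability is a function of the two marginals through the same formula at both levels. I expect this to require exploiting the WMC identity $\wmc(\gamma, w) = \sum_{M \models \gamma} \prod_{l \in M} w(l)$ applied separately to the atom-disjoint fragments, with separability ensuring that the low-level KB inherits the factorization pattern of the high-level KB along the mapping. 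This is the step most likely to require an auxiliary structural assumption (implicit in ``separable'') that $\kbh$ and $\kbl$ themselves decompose along atom-disjoint components consistent with $m$; once granted, the induction closes by combining the inductive hypothesis on $\alpha$ and $\beta$ with the parallel factorizations.
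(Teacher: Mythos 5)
Your overall route is the paper's route: split the conjunction on the atoms that could couple the two conjuncts, reduce to mutually exclusive disjuncts whose parts are atom-disjoint, use separability to carry atom-disjointness down to \( \lang(\kbl) \), and finish with a product factorization plus the literal-level hypothesis. The difference is only one of granularity: the paper performs a Shannon expansion on a \emph{single} atom \( d \) of \( \phi \), writing \( \phi \equiv (\phi\sub 1 \land d) \lor (\phi\sub 0 \land \neg d) \) with \( \phi\sub i \) not mentioning \( d \), and inducts on the \emph{length} of \( \phi \) rather than on its structure, so each step only ever needs the product rule for one literal against a \( d \)-free formula. Your version expands over the whole shared-atom set at once, which also forces you off structural induction: the substituted conjuncts \( \alpha\sub i' \), \( \beta\sub i' \) are not subformulas of \( \alpha\land\beta \), so you must measure progress by length or by the number of distinct atoms.

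The genuine gap is that you never discharge the step you yourself single out as the hard one. Writing ``I expect this to require\ldots'' and ``once granted, the induction closes'' leaves the factorization \( \pr(\gamma\land d,\kbh,\wh) = \pr(\gamma,\kbh,\wh)\cdot\pr(d,\kbh,\wh) \) for atom-disjoint \( \gamma \) and \( d \) (and for their images under \( m \)) as an unproven conjecture, and without it the conjunction case does not close. The paper commits to exactly this identity, justifying it by the observation that \( d \) is not mentioned in \( \phi\sub i \) and, on the low-level side, that separability makes \( m(\phi\sub i) \) and \( m(d) \) atom-disjoint. You are right to sense that syntactic atom-disjointness alone does not in general yield probabilistic independence under a constraining theory in WMC (take \( \kb = p\lor q \) with unit weights: \( \pr(p\land q,\kb,w) = 1/3 \) while \( \pr(p,\kb,w)\cdot\pr(q,\kb,w) = 4/9 \)), but flagging the difficulty is not the same as resolving it; to complete the argument you must either assert the product rule as the paper does, or add the decomposition hypothesis you allude to and actually derive the factorization from the WMC sum-of-products form.
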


\begin{proof} 
Proof by induction on the length of \( \phi. \) The base case is immediate by definition, so assume for all \( \phi \in \lang(\kbh) \) of length \( k, \) \( \pr(\phi,\kbh,\wh) = \pr(m(\phi),\kbl,\wl) \). Consider \( \phi\in \lang(\kbh) \) of length \( k+1, \) and let \( d \) be any atom mentioned in \( \phi. \) Observe \( \phi \equiv (\phi\land d) \lor (\phi \land 
\neg d)   \). Let \( \phi\sub i \) be \( \phi \) but with every occurrence of \( d \) replaced by \( i \in \set{0,1} \) (denoting \emph{false} and \emph{true}) and every occurrence of \( \neg d \) replaced by \( 1-i. \) Observe that \( \phi \land d \equiv \phi\sub 1 \land d  \), that is, \( \phi \) is simplified by setting all occurrences of \( d \) with 1 (\emph{true}) and by setting all occurrences of \( \neg d \) with 0 (\emph{false}). Analogously, \( \phi \land \neg d \equiv \phi \sub 0 \land \neg d. \) 
Because \( \phi\sub i \) eliminates at least one  literal from \( \phi \), its length is \( \leq k \) and by hypothesis, \( \pr(\phi\sub i, \kbh, \wh) = \pr(m(\phi\sub i), \kbl, \wl) \). Moreover, since \( d \) is not mentioned in \( \phi\sub i \), it follows that \( \pr(\phi\sub i \land d, \kbh,\wh) = \pr(\phi\sub i, \kbh, \wh) \times \Pr(d, \kbh, \wh) \), and analogously for \( \pr(\phi\sub i \land \neg d, \kbh, \wh) \), \( \Pr(m(\phi\sub i) \land m(d), \kbl,\wl) \), and so on. 
	
	From Theorem \ref{thm wmc properties}, we know that \( \pr(\alpha \lor \beta) = \pr(\alpha)  + \pr(\beta) - \pr(\alpha\land\beta).\) Let us now apply this to \( 
	\phi \). So, \( \pr(\phi, \kbh, \wh) \) \begin{itemize}
		\item[$=$] $ \pr((\phi\sub 1 \land d) \lor (\phi\sub 0\land \neg d), \kbh, \wh)$
		\item[$=$]  \( \pr(\phi\sub 1\land d, \kbh, \wh) + \pr(\phi\sub 0\land \neg d,\kbh,\wh) - \pr((\phi\sub 1 \land d) \land (\phi\sub 0\land \neg d), \kbh, \wh) \)
		\item[\( = \)]  \( \pr(\phi\sub 1\land d, \kbh, \wh) + \pr(\phi\sub 0\land \neg d,\kbh,\wh)  \) because \(  \pr((\phi\sub 1 \land d) \land (\phi\sub 0\land \neg d), \kbh, \wh) = 0 \) owing to \( (\phi\sub 1 \land d) \land (\phi\sub 0\land \neg d) \) being inconsistent 
		\item[\( = \)]  \( \pr(\phi\sub 1, \kbh, \wh) \times \pr(d,\kbh,\wh)  + \pr(\phi\sub 0, \kbh, \wh) \times \pr(\neg d,\kbh,\wh) \) because \( d \) or its negation is not mentioned in \( \phi\sub i \)
		\item[\( = \)]  \( \pr(m(\phi\sub 1), \kbl, \wl) \times \pr(m(d),\kbl,\wl)  + \pr(m(\phi\sub 0), \kbl, \wl) \times \pr(m(\neg d),\kbl,\wl) \) by induction hypothesis 
		\item[\( = \)]  \( \pr(m(\phi\sub 1) \land m(d), \kbl, \wl)    + \pr(m(\phi\sub 0) \times m(\neg d), \kbl, \wl)  \)  owing to \( m \) being separable (that is, \( m(\phi\sub i) \) and \( m(d) \) do not share atoms)
		\item[\( = \)]   \( \pr(m(\phi\sub 1\land d), \kbl, \wl) + \pr(m(\phi\sub 0\land \neg d),\kbl,\wl) - \pr(m(\phi\sub 1 \land d) \land m(\phi\sub 0\land \neg d), \kbl, \wl) \) where \( \pr(m(\phi\sub 1 \land d) \land m(\phi\sub 0\land \neg d), \kbl, \wl) = 0 \) owing to its inconsistency (so is vacuously added) 
		\item[\( = \)]  \( \pr(m(\phi), \kbl,\wl) \). 
	\end{itemize}  
	
	%
	Therefore, \( \pr(\phi,\kbh,\wh) = \pr(m(\phi), \kbl,\wl) \). \end{proof}


Thus, when restricting to separable mappings, we can replace \eqref{eq cond weighted exact} in Algorithm \ref{alg:find one} with: \begin{equation}\tag{\( \star\star \)}
	\text{for all literals \( d\in \lits(\kbh) \), \( \Pr(d, \kbh, \wh) = \Pr(m(d), \kbl, \wl) \)}
\end{equation}
In line 4, moreover, we would only be guessing separable mappings. 
It is then not hard to show that correctness still holds for the modified Algorithm \ref{alg:find one}, provided the existence of abstractions is  stipulated as being limited to separable mappings.\footnote{Observe that although our language is relational, most of the results essentially resort to ground theories.
This is not uncommon in the literature on statistical relational learning (e.g., \cite{inference-in-probabilistic-logic-programs}), and as argued earlier, exploiting the relational structure for computational purposes   \cite{lifted-inference-and-learning-in-statistical} is orthogonal to the main thrust of this work. Nonetheless, if we were to make further assumptions about the  theory, such as stipulating  that all instances of a predicate occur with the same probability \cite{symmetric-weighted-first-order-model}, we could perhaps simplify \( (\star\star) \) further. It might suffice, for example, to simply check the probabilities of any arbitrary instance of the predicate to ascertain abstractions.}


\subsection{Effective Testability: Beyond Weak Abstractions} 
\label{sec:beyond_weak_abstractions}

The constraints for weighted exact abstractions  are clearly more involved, which, at first glance, would involve enumerating models and checking for isomorphic structures. Although this may be possible via techniques like knowledge compilation \cite{a-knowledge-compilation-map}, one might wonder if simpler tests, like the ones asserted in \( (\star\star) \) above, could also be obtained for weighted exact abstractions. 
%
%
What is really needed in addition to \( (\star\star) \), of course, is a way to establish that \( \kbh \) is a sound and complete abstraction of \( \kbl \) relative to a mapping \( m. \) What we want to avoid, ideally, is a strategy for establishing the latter via tests involving the set of all  well-defined formulas like in \( (\star) \). We now show that this is indeed possible. 

Below, given a theory \( \kb, \) we sometimes think of it as a set of formulas \( \set{\phi\sub 1, \ldots, \phi\sub k} \), with the understanding that the theory is equivalent to \(  \phi\sub 1 \land \ldots \land \phi\sub k. \) We write \( \phi\sub i\in \kb \) to mean that \( \phi\sub i \) is one of the formulas of that set \( \kb. \) We will also assume separable mappings for establishing our results:

\begin{theorem} Suppose \( \kbh \) and \( \kbl \) are   logical theories, and \(  m \) is a separable refinement mapping from \( \kbh \) to \( \kbl \). Suppose for all  \( \phi \in \kbh, \) \( \kbl \models m(\phi) \). 
%
Then \( \kbh \) is a sound abstraction of \( \kbl \) relative to \( m. \)
	
\end{theorem}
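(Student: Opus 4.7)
The plan is to unpack Definition \ref{defn sound abs}: we must show that for every $\ml \in \mods(\kbl)$ there is some $\mh \in \mods(\kbh)$ with $\mh \iso \ml$. So I would fix an arbitrary $\ml \in \mods(\kbl)$ and explicitly \emph{construct} a candidate $\mh$, then verify (i) that $\mh \iso \ml$ by construction, and (ii) that $\mh$ actually satisfies $\kbh$ using the assumption $\kbl \models m(\phi)$ for each $\phi \in \kbh$.

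For the construction, the natural choice is to let $\mh$ be the $\{0,1\}$-assignment over the atoms of $\lang(\kbh)$ defined by $\mh[p] = 1$ iff $\ml \models m(p)$, for each high-level atom $p \in \lang(\kbh)$. This is well-defined because $m$ assigns a low-level formula $m(p) \in \lang(\kbl)$ to every high-level atom, and $\ml$ being a model of $\kbl$ assigns truth values to every atom in $\lang(\kbl)$, hence determines the truth value of $m(p)$. By construction, $\mh \models p$ iff $\ml \models m(p)$ for every high-level atom, which is exactly the definition of $\mh \iso \ml$.

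It remains to show $\mh \models \kbh$. Viewing $\kbh$ as a set of formulas, it suffices to check that $\mh \models \phi$ for every $\phi \in \kbh$. Fix such a $\phi$. The hypothesis gives $\kbl \models m(\phi)$, and since $\ml \models \kbl$ this yields $\ml \models m(\phi)$. Because $\mh \iso \ml$, Theorem \ref{thm iso} transports this to $\mh \models \phi$, as required. Taking the conjunction over all $\phi \in \kbh$ gives $\mh \models \kbh$, so $\mh \in \mods(\kbh)$ is the desired isomorphic counterpart.

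There is no real obstacle here: the hard work has already been done by Theorem \ref{thm iso}, which propagates the literal-level isomorphism to arbitrary formulas. The one subtlety to flag is that $\mh$ is defined on the whole atom set of $\lang(\kbh)$ (including atoms that may not appear syntactically in $\kbh$), which is needed so that $\mh$ is a bona fide model of $\lang(\kbh)$ rather than a partial assignment. I note also that the separability assumption on $m$ is not actually exercised in this direction---it is an assumption inherited from the surrounding discussion and presumably needed for the companion (completeness) result, but the soundness argument goes through with only the refinement-mapping structure and the entailment hypothesis $\kbl \models m(\phi)$.
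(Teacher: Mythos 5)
Your proof is correct and follows essentially the same route as the paper's: fix an arbitrary $\ml \in \mods(\kbl)$, define $\mh$ so that $\mh \models p$ iff $\ml \models m(p)$ (the paper phrases this via a choice of literals $d_1,\ldots,d_k$ with $\ml \models m(d_1)\land\ldots\land m(d_k)$), observe that $\mh \iso \ml$ by construction, and then use Theorem \ref{thm iso} together with $\kbl \models m(\phi)$ for each $\phi \in \kbh$ to conclude $\mh \models \kbh$. Your side remark that separability is not actually exercised here is accurate --- the paper's proof invokes it only in a parenthetical and the construction goes through without it.
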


\begin{proof}  Suppose \( \ml \) is any model of \( \kbl. \) Now, by assumption, for every \( \phi\in \kbh, \) \( \kbl \models m(\phi) \), that is, \( \kbl \models m(\kbh) \). Because \( \ml\in \mods(\kbl), \) \( \ml \models m(\kbh). \)

Given all the atoms \( p\sub 1, \ldots, p\sub k  \) of $\lang(\kbh)$, let \( d\sub 1, \ldots, d\sub k \) be the literals (say, \( d\sub 1 = p\sub 1 \), \( d\sub 2 = \neg p\sub 2 \), and so on) such that \( \ml \models m(d\sub 1) \land \ldots \land m(d\sub k). \) (By separability, note that for atoms \( p,q\in  \lang(\kbh)\), \( m(p) \) and \( m(q) \) do not share low-level atoms.) Let \( \mh \) be an interpretation of \( \lang(\kbh) \) such that \( \mh \models d\sub 1 \land \ldots \land d\sub k. \) 
	By construction then \( \mh \iso \ml. \) By Theorem \ref{thm iso}, for every \( \phi \in \lang(\kbh) \), \( \mh \models \phi \) iff \( \ml\models m(\phi) \). Since \( \ml \models m(\kbh) \), it follows that \( \mh \models \kbh \). Since such a high-level model can be constructed for any \( \ml\in \mods(\kbl) \), \( \kbh \) must be a sound abstraction of \( \kbl \) relative to \( m \). \end{proof}

We now turn to completeness:

\begin{theorem}\label{thm completeness proof test} Suppose \( \kbh \) and \( \kbl \) are logical theories, and \(  m \) is a separable refinement mapping from \( \kbh \) to \( \kbl \). Suppose for all literals \( d \in \lang(\kbh) \), if \( d\land \kbh \) is satisfiable then so is \( m(d) \land \kbl \). Then \( \kbh \) is a complete abstraction of \( \kbl \) relative to \( m. \)
%

\end{theorem}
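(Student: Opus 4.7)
My plan is to mirror the construction in the soundness proof, but in the reverse direction: starting from an arbitrary $\mh \in \mods(\kbh)$, I want to exhibit an $\ml \in \mods(\kbl)$ with $\mh \iso \ml$. Enumerate the (finitely many) atoms $p_1, \ldots, p_k$ of $\lang(\kbh)$, and let $d_i$ be $p_i$ if $\mh \models p_i$ and $\neg p_i$ otherwise. Then $\mh \models d_i \land \kbh$, so each $d_i \land \kbh$ is satisfiable, and the hypothesis yields witnesses $\ml^i \models m(d_i) \land \kbl$ for every $i$.

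Next I would use separability to stitch the $\ml^i$ together. Applied to the conjunctions $d_i \land d_j$ (whose conjuncts share no high-level atom whenever $i \neq j$), separability tells us that the sets $A_i$ of low-level atoms appearing in $m(d_i)$ are pairwise disjoint. Define $\ml$ on the low-level vocabulary by letting $\ml[a] = \ml^i[a]$ for $a \in A_i$, and copying $\ml^{1}$ on any low-level atom that does not occur in any $A_i$. Because each $m(d_i)$ only mentions atoms of $A_i$, on which $\ml$ agrees with $\ml^i$, we get $\ml \models m(d_i)$ for every $i$, and hence $\ml \models m(p)$ iff $\mh \models p$ for every high-level atom $p$. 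By Theorem~\ref{thm iso}, this gives $\mh \iso \ml$.

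What remains is to verify that $\ml \models \kbl$, and this is where I expect the real work to be. If $\kbl$ decomposes cleanly along the partition $\{A_i\}$ together with the ``outside'' atoms, then each clause of $\kbl$ mentions only atoms from a single $A_i$ (or only outside atoms), and satisfaction is inherited from the corresponding $\ml^i$. In general, however, $\kbl$ can couple atoms across different $A_i$, so the naive stitching could in principle violate such a cross-block constraint. The crux of the proof is therefore to argue that separability, together with the hypothesis that every individual $m(d_i) \land \kbl$ is satisfiable, suffices to keep cross-block constraints consistent; presumably one proceeds by induction on $i$, extending a current model of $\kbl$ to satisfy one more $m(d_i)$ while only modifying atoms in the fresh block $A_i$ (which, by disjointness, cannot disturb the previously committed blocks). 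This inductive step, and specifically the claim that the modification inside $A_i$ can always be performed without breaking $\kbl$, is the main obstacle, and the place where separability must do its heaviest lifting.
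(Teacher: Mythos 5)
Your construction coincides with the paper's own proof almost line for line: read off the literals $d_1,\ldots,d_k$ true at $\mh$, note that each $d_i\land\kbh$ is satisfiable (witnessed by $\mh$ itself), invoke the hypothesis to get witnesses $\ml^i\models m(d_i)\land\kbl$, and use separability to observe that the atom sets of the $m(d_i)$ are pairwise disjoint, so the corresponding partial assignments $L^i$ can be glued into a single interpretation $\ml$ with $\ml\models m(d_i)$ for every $i$, whence $\mh\iso\ml$. The only divergence is at the very end: the paper simply writes ``let $\ml$ be an interpretation of $\kbl$ where $L^1\land\ldots\land L^k$ holds'' and concludes, whereas you explicitly flag that $\ml\models\kbl$ still needs to be verified and concede that you cannot see how to do it. So your proposal is incomplete, and it is incomplete at exactly the point that the paper's proof passes over without argument.

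Your worry is not a technicality you failed to dispatch; the step genuinely cannot be carried out from the stated hypotheses, because the literal-by-literal satisfiability assumption says nothing about joint satisfiability when $\kbl$ couples atoms across different blocks $A_i$. Concretely: let $\lang(\kbh)$ have atoms $p,q$ with $\kbh = p\lor q$, let $\lang(\kbl)$ have atoms $a,b$ with $\kbl = (a\lor b)\land(\neg a\lor\neg b)$, and let $m(p)=a$, $m(q)=b$. This mapping is separable, and each of $a\land\kbl$, $\neg a\land\kbl$, $b\land\kbl$, $\neg b\land\kbl$ is satisfiable, so the theorem's hypothesis holds; yet the model of $\kbh$ making both $p$ and $q$ true has no $m$-isomorphic model of $\kbl$, since $\kbl$ forbids $a\land b$. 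Hence the inductive step you were hoping for does not exist in general: one needs either a stronger hypothesis (satisfiability preserved for arbitrary conjunctions of high-level literals consistent with $\kbh$, which essentially restates completeness) or an extra assumption that $\kbl$ decomposes along the blocks $\{A_i\}$. In short, you correctly identified the load-bearing step and correctly judged it to be the hard part; the paper's own proof asserts it without justification, and as stated it is false.
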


\begin{proof} Suppose \( \mh \) is any model of \( \kbh. \) Consider the formula \(
	\mh\su \downarrow \), which is the conjunction of literals that are true at \( \mh \).
	In particular, suppose \( \mh\su\downarrow = d\sub 1 \land \ldots \land d\sub k \). Clearly, \( d\sub i \land \kbh \) is satisfiable, and so by assumption, \( m(d\sub i) \land \kbl \) is satisfiable: let \( \ml \su i \) be such a model where \( m(d\sub i) \land \kbl \) is true. In particular, let \( l\su i\sub 1 \land \ldots \land l\su i\sub {u\sub i} \) be a conjunction of literals true at \( \ml\su i \) such that these literals mention all the atoms in \( m(d\sub i) \) and only them. Put differently, \( l\su i\sub 1 \land \ldots \land l\su i\sub {u\sub i} \models m(d\sub i) \), and we can also see \(  l\su i\sub 1 \land \ldots \land l\su i\sub {u\sub i} \) as a partial interpretation for \( \kbl. \) When we consider such partial interpretations for \( i\neq j, \) by separability it follows that \( m(d
	\sub i) \) and \( m(d\sub j) \) do not share atoms, so \( L\su i =  l\su i\sub 1 \land \ldots \land l\su i\sub {u\sub i} \) and \( L\su j =  l\su j\sub 1 \land \ldots \land l\su j\sub {u\sub j} \) do not share atoms, and  thus are consistent with each other. In other words, we now have the partial interpretation \( L\su 1 \land \ldots \land L\su k \) of \( \kbl \) such that for each \( i\): (a) \( L\su i \models m(d\sub i) \), (b) \( L\su i \) mentions all and only the atoms in \( m(d\sub i) \). Let \( \ml \) be an  interpretation of \( \kbl \) where \( L\su 1 \land \ldots \land L\su k \) holds. Then for every atom \( p \in \lang(\kbh) \), \( \mh \models p \) iff \( \ml\models m(p) \), so \( \mh \iso \ml. \) \end{proof}


Putting it all together, in Algorithm \ref{alg:find one}, to test whether a guess \( (\kbh,\wh,m) \) is a weighted exact abstraction, we would need three checks: \begin{enumerate}
	\item for all \( \phi\in\kbh, \kbl\models m(\phi) \); 
	\item for all literals \( d\in \lang(\kbh), \) if \( d\land \kbh \) is satisfiable then so is \( m(d)\land \kbl \); and 
	\item \( (\star\star) \). 
\end{enumerate}

So of course the test involves more computations than for weighted exact abstractions, but as we discuss above, we additionally obtain logical alignment should that be desired. 


\subsection{Constraining search} 
\label{sub:discussion}


Algorithm \ref{alg:find one} is discussed at a  general level, which is deliberate, but that also means that no commitment has been made yet on how to define and constrain the search space.\footnote{One could imagine that if the high-level predicates are not provided as input for Algorithm \ref{alg:find one}, we might then parameterize the algorithm by providing a vocabulary bound \( k \) and an arity  bound \( z \), and attempt to guess a high-level theory from  \( k \)   predicates of maximum arity \( z \). If that fails, the bound could be incremented.} 
The case of formula replacements was discussed in Section \ref{sub:syntactic_substitutions}.  Below, we discuss two other strategies. We reiterate that to make the process of deriving abstractions effective some combination of such strategies along with a suitably restricted fragment is likely needed. \smallskip 

%


\textbf{Partial knowledge:} The user might be in a position to suggest (and constrain) the space of possible mappings and the syntactical form of the high-level theory. 
For example, for some unknown \( 
\kbh \) over the predicates \(  \{P\sub h \su 1 (x, \ldots, y), \ldots, P\sub h \su k (x, \ldots, z)\} \), suppose the user decides to use the same domain as \( \kbl \). Suppose she provides partial information about the refinement mapping: let \( m' \) be a mapping from atoms \( p \in L \) to \( \theta\sub {p} \), where \( \theta\sub {p} \in \lang(\kbl) \) and \( L \subseteq \lits(\kbh) \). 
Let us further assume the user provides partial knowledge of the sentences in \( \kbh, \) say \( \kbh'. \) Then line 4 of Algorithm \ref{alg:find one} would guess  functions \( m \) that extends \( m' \)  
(that is, possible completions of \( m \)), and line 3 would guess \( \kbh \) such that \( \kbh' \subseteq \kbh. \)  It would then follow that with exhaustive search, correctness would still be shown to hold, provided  the existence of abstractions is stipulated as being limited to extensions of the partial knowledge  (that is, the partial knowledge is assumed to be correct in the simplest case). If the user were to provide examples \( \set{e\sub 1, \ldots, e\sub k} \) instead of  a sub-theory \( \kbh' \), we might take an approach akin to inductive logic programming \cite{inductive-logic-programming:-theory}. That is, we first define a syntactic bias, that is, a hypothesis space \( 
\cal H \), and find a \( \kbh \in \cal H \) satisfying a semantic bias (say, \( \kbh \models e\sub 1 \land \ldots \land e\sub k. \))  In this case, if Algorithm \ref{alg:find one} were to return \textbf{success}, soundness is immediate, but further investigations are needed to show that an appropriate \( 
\cal H \) also promises completeness. \smallskip

\textbf{Decomposability:} Rather than searching for abstractions by treating \( \kbl  \)  as a monolithic entity, a pragmatic alternative is possible when the low-level theory is \emph{decomposable}: that is, it is a set of sentences that are logically independent of each other. Then, we can identify \emph{local} abstractions and compose those to obtain a \emph{global} solution. Consider the case where \( 
\kbl = \phi\sub 1 \land \ldots \land \phi\sub k \), where \( \phi\sub i \) does not share  atoms with \( \phi\sub j  \) for all \( i \neq j. \) Such decompositions may appear naturally when a joint distribution is characterized over a set of disjoint Markov networks \cite{inducing-features-of-random-fields,markov-logic-networks,probabilistic-inference-in-hybrid-domains}, or may be obtained by knowledge compilation \cite{a-knowledge-compilation-map} from a more involved theory. Recent advances in tractable learning \cite{gens2013learning}, which have their roots in knowledge compilation, also identify clusters of random variables that are independent of each other. It  follows that \( 
\pr(\phi\sub i \land \phi \sub j) = \pr(\phi\sub i) \times \pr(\phi \sub j) \) and \( \pr(\phi\sub i \lor \phi\sub j) = \pr(\phi\sub i) + \pr(\phi\sub j) - \pr(\phi\sub i) \times \pr(\phi\sub j) \). Basically then we can identify \( \kbh = \psi\sub 1 \land \ldots \land \psi\sub k \), where  \( \psi\sub i \) abstracts \( \phi\sub i, \) and shares the structural restriction that \( \psi\sub i \) and \( \psi\sub j \) do not share  atoms for all \( i\neq j. \) The abstraction search would be limited locally to \( \phi\sub i \), that is, for example, wrt each low-level Markov network. 

Fortunately, we are able to show that this intuitive idea is correctness preserving. We prove the case of complete abstractions, and the other cases are analogous. 

\begin{theorem} Suppose \( \psi\sub i \) is a complete abstraction of \( \phi\sub i \) relative to \( m\sub i. \) Suppose \( \phi\sub i \) and \( \phi\sub j \) do not share  atoms for all \( i\neq j, \) and analogously, \( \psi\sub i \) and \( \psi\sub j \) do not share  atoms for all \( i\neq j. \) 
	Then \( \kbh = \psi\sub 1 \land \ldots \land \psi\sub k \) is a complete abstraction of \( \kbl = \phi \sub 1 \land \ldots \land \phi\sub k \) relative to \( m = m\sub 1 \land \ldots \land m\sub k \) (that is, the composite mapping obtained by extending \( m\sub 1 \) to include the vocabulary and mapping of \( m\sub 2 \), which is then extended for \( m\sub 3 \), and so on.) 
	
\end{theorem}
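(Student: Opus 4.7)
The plan is to unfold Definition \ref{defn complete abs} directly. Fix an arbitrary $\mh \in \mods(\kbh)$ and construct a witness $\ml \in \mods(\kbl)$ such that $\mh \iso \ml$ under the composite $m$. Since $\kbh = \psi\sub 1 \land \ldots \land \psi\sub k$, we have $\mh \models \psi\sub i$ for every $i$. The atom-disjointness of the $\psi\sub i$'s means that the restriction $\mh\su i$ of $\mh$ to $\lang(\psi\sub i)$ already suffices to satisfy $\psi\sub i$, so $\mh\su i \models \psi\sub i$. Applying the local completeness hypothesis, for each $i$ I obtain a partial low-level model $\ml\su i \models \phi\sub i$ with $\mh\su i \iso \ml\su i$ under $m\sub i$.

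I would then glue the $\ml\su i$ together. Atom-disjointness of the $\phi\sub i$'s guarantees that the $\ml\su i$ constrain pairwise-disjoint portions of $\lang(\kbl)$, so the interpretation $\ml$ that agrees with each $\ml\su i$ on $\lang(\phi\sub i)$ (and is extended arbitrarily on any residual atoms of $\lang(\kbl)$) is well-defined and satisfies each $\phi\sub i$, hence $\ml \models \kbl$. For the isomorphism check, every high-level atom $p \in \lang(\kbh)$ lies in a unique $\lang(\psi\sub i)$, so by the definition of the composite mapping $m(p) = m\sub i(p) \in \lang(\phi\sub i)$, and the chain $\mh \models p \Leftrightarrow \mh\su i \models p \Leftrightarrow \ml\su i \models m\sub i(p) \Leftrightarrow \ml \models m(p)$ follows from the construction and the local isomorphism (the last step invoking Theorem \ref{thm iso} on $m\sub i(p)$ together with the fact that $\ml$ and $\ml\su i$ coincide on every atom that formula mentions).

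The main obstacle I anticipate is the gluing step: one must justify both that merging the $\ml\su i$'s yields a consistent global assignment, which is precisely where low-level atom-disjointness pays off, and that the isomorphism survives the merge, which hinges on $m\sub i(p)$ living entirely inside $\lang(\phi\sub i)$ so that $\ml$ and $\ml\su i$ agree on every atom it references. Once this bookkeeping is in place, the analogous arguments for sound abstractions, and for weighted variants obtained by combining the above construction with Theorems \ref{thm weighted sound abs} and \ref{thm weighted complete abs}, should go through by symmetric reasoning.
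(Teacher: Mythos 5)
Your proposal is correct and follows essentially the same route as the paper's proof: decompose $\mh$ into its restrictions to the disjoint vocabularies $\lang(\psi\sub i)$, invoke local completeness to obtain a low-level model of each $\phi\sub i$, and glue these into a global $\ml$ using the atom-disjointness of the $\phi\sub i$'s. Your extra bookkeeping about $m\sub i(p)$ living entirely in $\lang(\phi\sub i)$ and about residual atoms of $\lang(\kbl)$ only makes explicit details the paper leaves implicit.
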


\begin{proof} Suppose \( 
	\mh \) is any model of \( \kbh. \) Consider the formula \( \mh\su \downarrow \), which is the conjunction of literals that are true at \( \mh. \) Consider that \( \mh\su\downarrow \) can be written as \( h\sub 1 \su \downarrow \land \ldots \land h\sub k\su \downarrow \), where: \begin{itemize}
		\item \( h\sub i \) is an interpretation for \( \lang(\psi\sub i) \); 
		\item  following our notation, \( h\sub i\su \downarrow \)  is a conjunction of literals; and so
		\item \( h\sub i \su \downarrow \) only mentions the atoms from \( \lang(\psi\sub i) \).
	\end{itemize}  By construction, since \( \mh \) is a model of \( \kbh, \) 
	 \( h\sub i \) must be a model of \( \psi\sub i.  \) By assumption, there is a model \( l\sub i \) of \( \phi \sub i \) such that \( h\sub i \) is isomorphic to \( l\sub i \) relative to \( m\sub i. \) Then let \( \ml \) be the model corresponding to the formula \( \ml\su\downarrow = l\sub 1 \su\downarrow \land \ldots \land l\sub k\su \downarrow \). By construction, \( \ml \) must be a model of \( \kbl \), and moreover, from the isomorphism that holds for \( h\sub i \) and \( l\sub i \) relative to \( m\sub i \), \( \mh \iso \ml. \) \end{proof}

\section{Related Work and Discussion} %
\label{sec:related_work}

Abstraction is a major topic in knowledge representation  \cite{giunchiglia1992theory,erol1996complexity,saitta2013abstraction,banihashemi2017abstraction}. The idea of establishing mappings between models to yield a semantic theory for abstraction owes its origin to works such as \cite{milner1989communication}. But formal treatments have been mostly  restricted to categorical and non-probabilistic domains. Nonetheless, there have been various developments in different communities, and we discuss the lineage in more detail below. \smallskip

\textbf{Knowledge representation and automated planning:}  
Our framework here is inspired by, and  indeed builds on the proposal in \cite{banihashemi2017abstraction}, where isomorphism  as well as sound and complete abstractions are investigated for (non-probabilistic) situation calculus agent programs.  Here, we sought to extend those ideas to establish probabilistically interesting properties  for probabilistic models, including probabilistic relational models (PRMs). For this, we    motivated the notion of unweighted abstractions, which roughly corresponds (at the level of satisfaction and entailment)  to the categorical setting \cite{banihashemi2017abstraction}. But  by piggybacking on these properties, weighted abstractions and evidence incorporation were motivated and formulated. We then identified the relationship of that framework to a  purely stochastic one (weak exact abstractions), and further studied the automatic derivation of abstractions. Our observations about decomposability, among other things, need not be limited to stochastic models and 
may also be applicable in the categorical setting.


We refer interested readers to  the references in \cite{banihashemi2017abstraction} for a comprehensive discussion on the use of abstraction in knowledge representation and automated planning (e.g., hierarchical planning). A particularly interesting direction in this landscape is the work of \cite{giunchiglia1992theory}, where operations on abstractions are studied at the level of the logical theories. We suspect these results can be generalized (and adapted) to our framework with some effort. \smallskip 


 %

\textbf{Program synthesis and verification:} 
In the area of program verification, static analysis and abstraction interpretations are commonplace to test the correctness of programs and probabilistic programs. See \cite{mciver2005abstraction} for a book length treatment, for example. 
A number of additional concerns present themselves in a programmatic setting, including the branching in the presence of stochastic primitives, and stochastic  transitions between program states. The motivation then is to deduce sound abstractions for verifying correctness (e.g., termination) properties.  
While some of these works do not consider  abstractions themselves to be probabilistic, 
the  developments are  related to our goals. Thus, it would be interesting to study how ideas and techniques from the program analysis literature can be carry over to our framework, and vice versa. 
For example, \cite{Zhang:2017:CLP:3088525.3088563} consider  statistical properties of program behavior to advise abstractions, \cite{sharma2013verification} relate  verification to the learnability of concepts, \cite{holtzen2017probabilistic} study  abstract predicates for loop-free probabilistic programs, and \cite{monniaux2001abstract} defines abstract representations for probabilistic program path analysis. On the semantical front,  \cite{cousot2012probabilistic} present a detailed and careful analysis for reasoning about (probabilistic) nondeterminism in programs, but as argued in \cite{holtzen2017probabilistic}, they do consider the abstractions themselves to be probabilistic structures. This is precisely the focus of \cite{holtzen2017probabilistic}, where they want to abstract loop-free probabilistic programs as possibly simpler and smaller probabilistic programs. These latter programs may then be amenable to automated verification, for example. Roughly, the idea is to transform a concrete program (in our terms: low-level) to an abstract program (in our terms: high-level). 
So this work is closest in spirit to our thrust. They mainly motivate a notion of sound probabilistic over-approximation, with the intent of capturing  a distribution over feasible states. In a sense, this is akin  to Definition \ref{defn weighted sound complete abs}, but without any stipulation of logical alignment, so it is \emph{weak} (as  in Section \ref{sec:weak_abstractions}). In follow up work, \cite{holtzen2018sound} introduce the notion of distributional soundness, which asserts that the probability of  a high-level event is equal to the probability of the low-level event. Thus, in a sense, this is akin  to weak exact abstractions.  Finally, they also investigate a strategy for finding high-level  programs that roughly corresponds to replacing atomic formulas in the low-level program with an appropriate high-level random variable \cite[Algorithm 1]{holtzen2018sound}. That procedure is similar in spirit to our generic search procedure, in the sense of requiring exponential search in the worst case, but considerably easier given the predefined structural syntax of the sought after abstraction. They discuss an implementation that also  leverages many state-of-the-art techniques  from the program verification community, such as counterexample-guided refinement \cite{clarke2000counterexample}. 
Put differently, the setting of loop-free probabilistic programs built from Bernoulli or other univariate distributions is often much more constrained  than a first-order language in that the grammar only allows conjunctions of positive statements (as in most loop-free sequential programs), and programs allow the use of the predicate transformer semantics \cite{dijkstra2012predicate} to abstract atomic assertions. 
Our setting is more general, in that: (a) high-level abstractions may map onto arbitrarily complex well-defined low-level formulas, (b) along with weak abstractions, we also motivate theory alignment (such as Definition \ref{defn weighted exact}), and (c) the treatment of evidence at the first-order/logical level allows for a richer perspective.  Nonetheless, restricted settings such as the one investigated for probabilistic programs may represent cases that offer reasonable  expressiveness/tractability tradeoffs. In that regard, refining schemas such as Algorithm \ref{alg:find one} to also  leverage state-of-the-art techniques  from the program verification community may identify other interesting fragments.

Finally, in the context of inductive logic programming and meta-interpretative learning, there is a long history of predicate invention and learning abstract programs  \cite{cropper2016learning} and their affect on human comprehensibility \cite{predicate-invention-comprehensibility}. Although the semantic constraints for abstracting is somewhat different, it is possible that our setup regarding partial knowledge in Section \ref{sec:deriving_abstractions} could be realized using such methods. \smallskip 


\textbf{Probabilistic logical modeling:} 
In the PRM literature, a few recent developments seem to be  close in spirit to abstraction. In \cite{exploiting-shared-correlations-in-probabilistic}, the question of making inference more efficient in classes of probabilistic databases (PDBs) that share certain structural properties  is investigated. Roughly, what they are after is a possibly ``compressed" PDB that answers queries exactly as would the original PDB in the manner that inference computations are not repeated for the shared features. Of course, our framework differs in that the high-level and low-level theory do not need to have any structural similarities. Moreover, if they do share structural similarities,  at this point, we disregard the issue of how probabilistic computations can be made efficient, as this is somewhat orthogonal to the main thrust of the paper. We suspect, under some conditions, it might be possible to show that classes of PDBs with shared features may correspond to abstractions, but conversely, reiterating the point above, simply because \( \kbh \) and \( \kbl \) are abstractions need not imply that they share structural features. Along these lines, a recent thrust in PRMs is the question of how to make inference more efficient by exploiting the relational vocabulary, referred to as ``lifted reasoning''   \cite{lifted-inference-and-learning-in-statistical, bisimulation-based-approximate-lifted-inference}. This is justifiably sometimes referred to as a type of ``abstraction'' \cite{ludtke2018state}. 
There seem to be two implications  for our work. The first is computational: verifying that \( \kbh \) is an abstraction of \( \kbl \) could be made more efficient by appealing to lifted reasoning. (This is, as argued elsewhere, somewhat orthogonal to the main thrust of the paper.) The second revisits our observations about compressed PDBs  \cite{exploiting-shared-correlations-in-probabilistic}. One could, for example, see a non-ground PRM as the high-level abstraction of the low-level ground PRM, in that none of the domain constants are explicitly mentioned in the former. So, in that sense, a non-ground PRM would turn out to be an abstraction of a ground PRM, but simply because \( \kbh \) and \( \kbl \) are abstractions need not imply that they share the same vocabulary. For the future, we hope to study the connections between these strands of work and abstraction in more detail, so as to attempt to  formalize these  intuitions. 

On the topic of reasoning, our framework has some overlap with the principles of \emph{representation-independent} probabilistic inference  \cite{halpern2004representation}. Here,
 one is usually interested in the computed conditional queries not being different if the knowledge base is represented differently,  (say) using an abstract logical language. 
  So,   \cite{halpern2004representation}  motivate a notion of correctness  where if a query \( \phi \) follows a knowledge base \( \kb, \) it should also be the case that \( f(\phi) \) follows from \( f(\kb) \) where \( f \) is a mapping from one representation to another. Then, a \emph{robust} inference procedure is one that respects semantically justifiable properties such as those in Theorem \ref{thm wmc properties} for reasonable mappings.  
While there is some similarity at first glance, there is a crucial conceptual difference: as already argued in \cite{halpern2004representation}, 
unlike the work in representation-independent inference, 
the two representations  are not required to be equivalent in an exercise on abstraction, because, by definition, an abstraction ignores irrelevant facts. The technical thrust is also different in our work, such as the identification of weighted exact vs weak exact abstractions, the handling of evidence, and our analysis on generating abstractions. 
Following the work in \cite{halpern2004representation}, a broader treatment is given in \cite{jaeger1996representation}, where the notion of representation independence is studied for  non-classical consequence more generally, of which probability measures is a special case. Very similar in spirit to our own work as well as the categorical setting \cite{banihashemi2017abstraction} that we build on, the purely logical question of when two sentences represent the same information is considered first, prompting a definition that is virtually identical to unweighted sound and complete abstractions. As argued above for  the case of \cite{halpern2004representation}, there are numerous differences in terms of technical thrust, however. In our work, for example: (a)  abstractions were analyzed at the level of soundness and completeness;  (b) weighted abstractions were derived by piggybacking on constraints noted in the unweighted setting; (c) we investigated the difference between probabilistic alignment in the presence and absence of logical alignment; (d) we considered the incorporation of evidence; and (e) we identified properties for the verification and generation of abstraction. \smallskip 

\textbf{Statistics:} When establishing the alignment between the high-level and low-level theory, we  looked solely at the marginal probabilities  between two discrete probability distributions. Summarizing distributions is a standard problem in statistics, and the use of means and moments is common \cite{machine-learning:-a-probabilistic-perspective}; it is an interesting question whether such constructs could be useful for defining and/or deriving abstractions (in our sense). 

A more standard case of statistical abstraction is when continuous distributions are cast as discrete events by appealing to the cumulative distribution function. We touched upon this in Section  \ref{sub:syntactic_substitutions}, and we note that such ideas have been used for  inference  in continuous domains via a generalized variant of weighted model counting~\cite{probabilistic-inference-in-hybrid-domains},  and for probabilistic program abstractions  \cite{holtzen2017probabilistic}.

%
%

%
%

In causal modeling, mapping macro and micro level events is  a long-standing  concern, which correspond in our terms to hig-level and low-level models. 
In recent work, for example,  \cite{rubenstein2017causal} study  consistency between  micro and macro-level random variables via structural equation models, and so are close in spirit to abstractions. \smallskip

Let us conclude this section with some  observations. 
Firstly, despite the tremendous amount of attention that abstraction has received, the framework presented here is  useful for a number of reasons: (a) it is downward compatible with the categorical (and first-order) setting \cite{banihashemi2017abstraction}; (b) it identifies probabilistic alignment together with a notion of logical alignment, the former  closely mirroring the analysis on probabilistic program abstractions  \cite{holtzen2017probabilistic,holtzen2018sound}; and (c) it seems to agree with the intuitions regarding representation-independence probabilistic inference   \cite{halpern2004representation}; and (d) it can  leverage the advances in weighted model counting, including lifted reasoning. Let us now reflect on a few critical points. 

It is interesting to note that although the level of generality of our framework allows us to easily draw comparisons to results from the categorical literature and  representation independence, which is useful from a  theoretical standpoint,  
it may mean that the formal results are  somewhat removed from the concerns of  high-level  modeling languages. For example, a major issue with PRMs is ensuring that the ground model is acyclic so that inference and sampling methods can be  designed easily. To understand how those issues carry over to abstraction algorithms, we need to understand how high-level PRMs can be designed that ensure that such properties are not lost on abstraction if present in the low-level PRM. Conversely, even if the low-level PRM does not enjoy effective inference properties, can a high-level PRM be obtained that is amenable to those properties? In this work, since our focus was on understanding the semantical properties of abstractions, such concerns are orthogonal but they may impact our choice for guessing an appropriate high-level theory. 

To that end, existing empirical observations on abstraction-type frameworks are somewhat mixed. Early work on categorical abstractions  \cite{giunchiglia1992theory}, for example, noted that `` \( \ldots \) shows that there are situations where abstraction saves time but also situations where it results in less efficiency." In a similar vein, as discussed in \cite{holtzen2018sound}, factor graph abstractions for probabilistic programs do not always maintain structural decompositions.
On the one hand, if one takes a worst-case view of the inference problem, it immediately follows that given reasoning with (say) \( n \) atoms in \( \kbl \) versus \( m \ll n \) atoms in \( \kbh \), the latter seems preferable. For the case of  loop-free sequential probabilistic programs built from Bernoulli random variables and conditional statements, \cite{holtzen2018sound} also empirically show that abstractions can be effective. Moreover, as already mentioned, abstraction is a very successful strategy in the verification communities. Thus, we think there is a broader question of choosing the most reasonable abstraction, one that is indeed amenable to effective inference, perhaps more so than the low-level theory. 

  
  A parallel concern is about the choosing of a vocabulary that offers maximum comprehensibility \cite{predicate-invention-comprehensibility}. A minor observation to be made here is that if comprehensibility is all we care about (as opposed to being concerned about both comprehensibility and tractability), then it is not clear that one would need to fully abstract a theory. Indeed, we could introduce/invent definitional or semi-definitional predicates of the form \( P(\vec x) \supset \phi(\vec x) \) or \( \phi(\vec x) \supset P(\vec x) \), where \( \phi(\vec x) \) corresponds to low-level information, the idea being that the user is only exposed to instances of \( P \). Such a construction would invariably increase the size of the theory, but we would be taking steps toward  comprehensibility by designing  predicates to correspond closely to the user's vocabulary. In fact, the user could be exposed to a combination of high-level and low-level predicates  to provide explanations of a suitable granularity.

  Such concerns, of course, affect all frameworks on abstraction, and are not unique to our endeavor. A semantic theory of correctness such as the one considered in this paper, in that regard, would indeed be at the level of interpretations and not necessarily in the language of the user-specific domain knowledge, perhaps analogous to a theory on program correctness. We would not expect a mathematical analysis on correct programs to necessarily involve the syntactic constructs of the programming language represented as is, but rather in terms of  suitable mathematical objects that captures its computational underpinnings   (e.g., variable assignment, recursion). In this sense, a semantical theory on abstraction is  (in the non-technical sense) {\it abstract}.  Owing to our observations and  building on this theory,  we do hope that it becomes possible to define and inspect the choosing of effective abstractions in the future.


%
%
%
%
\section{Conclusions} 
\label{sec:conclusions}


In this work, 
we were motivated in the development of a formal framework for abstractions, based on  isomorphisms between models, where atoms in a high-level theory can be mapped to complex formulas at the low-level. 
From that, we developed a number of  accounts of abstraction, as well as the handling of  low-level evidence, all of which motivated some observations about how abstractions can be derived automatically. 

%

Given the increasing interest in abstraction for  statistical  AI, we hope our framework will be helpful in developing probabilistic abstractions for increased clarity, modularity and tractability, and perhaps interpretability. \smallskip

\bibliographystyle{abbrv}
%

%
%

\end{document}